\newtheorem{mydef}{Definition}
\newtheorem{mythm}{Theorem}
\begin{document}
%
\title{Robust Ensemble Clustering Using\\ Probability Trajectories}

\author{Dong~Huang,~\IEEEmembership{Member,~IEEE,}
        Jian-Huang~Lai,~\IEEEmembership{Senior Member,~IEEE,}
        and~Chang-Dong~Wang,~\IEEEmembership{Member,~IEEE}
\IEEEcompsocitemizethanks{\IEEEcompsocthanksitem Dong Huang is with the School of Information Science and Technology,
Sun Yat-sen University, Guangzhou, China, and with the College of Mathematics and Informatics, South China Agricultural University, Guangzhou, China. \protect\\
E-mail:huangdonghere@gmail.com.}
\IEEEcompsocitemizethanks{\IEEEcompsocthanksitem Jian-Huang Lai is
with the School of Information Science and Technology,
Sun Yat-sen University, Guangzhou, China and with Guangdong Key Laboratory of Information Security Technology, Guangzhou, China. \protect\\
E-mail:stsljh@mail.sysu.edu.cn.}
\IEEEcompsocitemizethanks{\IEEEcompsocthanksitem Chang-Dong Wang is
with the School of Mobile Information Engineering,
Sun Yat-sen University, Zhuhai, China, and with SYSU-CMU Shunde International Joint Research Institute (JRI), Shunde, China. \protect\\
E-mail:changdongwang@hotmail.com.}
\thanks{The MATLAB code and experimental data of this work are available at:   {https://www.researchgate.net/publication/284259332}}}

\IEEEcompsoctitleabstractindextext{%
\begin{abstract}
Although many successful ensemble clustering approaches have been developed in recent years, there are still two limitations to most of the existing approaches. First, they mostly overlook the issue of uncertain links, which may mislead the overall consensus process. Second, they generally lack the ability to incorporate global information to refine the local links. To address these two limitations, in this paper, we propose a novel ensemble clustering approach based on sparse graph representation and probability trajectory analysis. In particular, we present the elite neighbor selection strategy to identify the uncertain links by locally adaptive thresholds and build a sparse graph with a small number of probably reliable links. We argue that a small number of probably reliable links can lead to significantly better consensus results than using all graph links regardless of their reliability. The random walk process driven by a new transition probability matrix is utilized to explore the global information in the graph. We derive a novel and dense similarity measure from the sparse graph by analyzing the probability trajectories of the random walkers, based on which two consensus functions are further proposed. Experimental results on multiple real-world datasets demonstrate the effectiveness and efficiency of our approach.
\end{abstract}

\begin{IEEEkeywords}
Ensemble clustering, consensus clustering, uncertain links, random walk, probability trajectory
\end{IEEEkeywords}}

\maketitle

\IEEEdisplaynotcompsoctitleabstractindextext

\IEEEpeerreviewmaketitle

\IEEEraisesectionheading{\section{Introduction}}

\IEEEPARstart{T}{he} ensemble clustering technique has recently been drawing increasing attention due to its ability to combine multiple clusterings to achieve a probably better and more robust clustering \cite{Fred05_EAC,LiT08,yang11_tkde,iam_on11_linkbased,LiT11,iamon12_tkde,yi_icdm12,ren13_icdm}. Despite the fact that many ensemble clustering approaches have been developed in recent years, there are still two limitations to most of the existing approaches.

First, the existing approaches mostly overlook the problem of uncertain links (or unreliable links) in the ensemble, which may mislead the overall consensus process. In the general formulation of the ensemble clustering problem \cite{Fred05_EAC,iam_on11_linkbased,iamon12_tkde,yi_icdm12}, we have no access to the original data features, as only the different types of relational information are available. The most basic relational information are the links between objects which reflect how objects are grouped in the same cluster or different clusters in the ensemble \cite{Fred05_EAC}. Based on the object-object links, coarser grains of links can be defined, e.g., the links between objects and clusters, the links between clusters, the links between base clusterings, etc. The links of one type or different types can be further used to form the similarity matrix \cite{Fred05_EAC,yang11_tkde,iam_on11_linkbased}, construct the graph model \cite{strehl02,fern04_bipartite,Mimaroglu11_pr}, or define the optimization problem \cite{LiT08,cristofor02,topchy05}.

A link between two data objects denotes that they appear in the same cluster in one or more base clusterings. The links between objects are typically represented by the co-association (CA) matrix \cite{Fred05_EAC,yang11_tkde,wang09_pr}. In the similarity graph induced by the CA matrix, each node corresponds to a data object and the weight of each link corresponds to an entry in the CA matrix. However, previous approaches generally overlook the different reliability of the links (or the entries in the CA matrix) and may suffer from the collective influence of the unreliable links. As an example, we construct an ensemble of $10$ base clusterings for the $\emph{MNIST}$ dataset (see Section~\ref{sec:construct_base} for more details). Figure~\ref{fig:link_weight_distribution} illustrates the distribution of the link weights in the similarity graph induced by the CA matrix, and Table~\ref{table:link_weight_distribution} shows the percentages of the links with different weights that make \emph{correct} decisions. Note that a "link" with zero-weight does not count as a link here. If there is a link between objects $x_i$ and $x_j$ in the graph AND $x_i$ and $x_j$ are in the same class in the ground-truth, then we say the link between $x_i$ and $x_j$ makes a \emph{correct} decision. As shown in Table~\ref{table:link_weight_distribution}, the links with greater weights are much more likely to make correct decisions and generally more reliable than the small-weight links. But unfortunately, the small-weight links, which are probably unreliable (or uncertain), make up the majority of the graph links (see Fig.~\ref{fig:link_weight_distribution}). When the number of the uncertain links is large, the collective influence of them may mislead the ensemble clustering process or even lead to deteriorative clustering results \cite{yi_icdm12}. It remains an open problem how to effectively and efficiently deal with the uncertain links (or probably unreliable links) and thereby enhance the robustness and accuracy of the consensus results.

\begin{figure}[!t]
\begin{center}
{
{\includegraphics[width=0.7\linewidth]{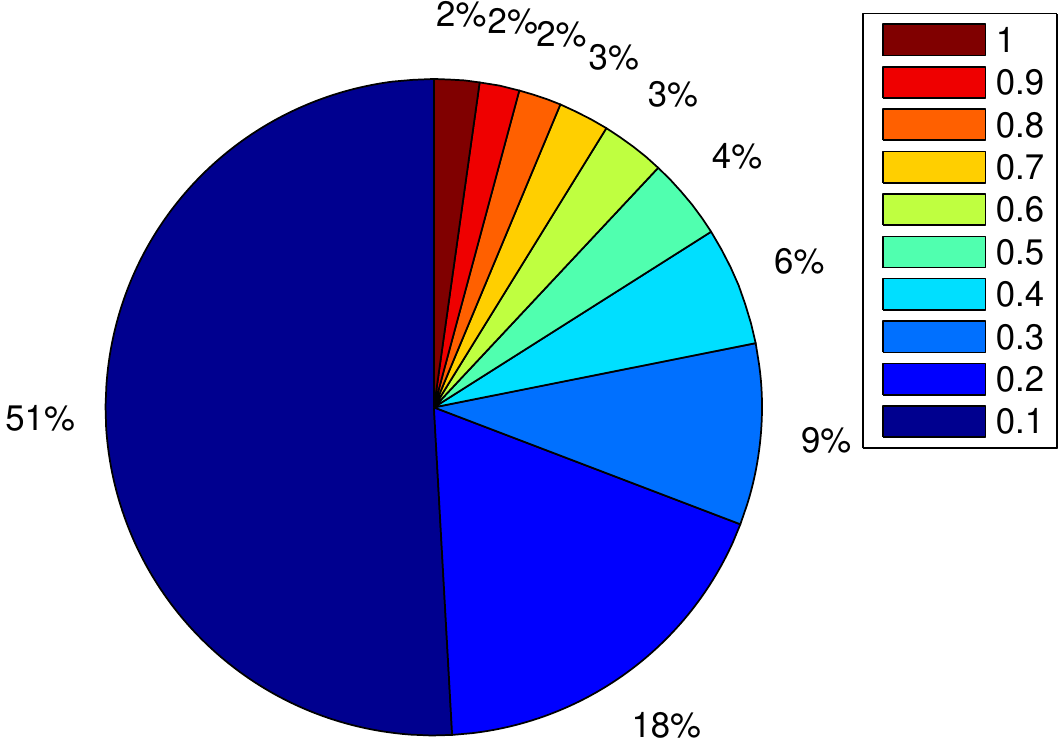}}}
\caption{The distribution of the link weights for the similarity graph induced by the CA matrix for the \emph{MNIST} dataset}
\label{fig:link_weight_distribution}
\end{center}
\end{figure}

\begin{table}[!t]\footnotesize 
\centering 
\caption{The distribution of the links with different weights and the proportion of them making correct decisions for the \emph{MNIST} dataset}
\label{table:link_weight_distribution} 
\begin{tabular}{m{0.7cm}<{\centering}|m{2.4cm}<{\centering}|m{4.6cm}<{\centering}}
\toprule
Weight  &$\frac{\#\textrm{Links with the weight}}{\#\textrm{All (non-zero) Links}}$    &$\frac{\#\textrm{Links with the weight that make correct decisions}}{\#\textrm{Links with the weight}}$\\
\midrule
$1$  &$2\%$     &$83\%$\\
$0.9$  &$2\%$     &$73\%$\\
$0.8$  &$2\%$     &$65\%$\\
$0.7$  &$3\%$     &$58\%$\\
$0.6$  &$3\%$     &$50\%$\\
$0.5$  &$4\%$     &$43\%$\\
$0.4$  &$6\%$     &$35\%$\\
$0.3$  &$9\%$     &$27\%$\\
$0.2$  &$18\%$     &$18\%$\\
$0.1$  &$51\%$     &$9\%$\\
\bottomrule
\end{tabular}
\end{table}

Second, the existing ensemble clustering approaches mostly lack the ability to utilize global structure information to refine the local links. In the classical evidence accumulation clustering \cite{Fred05_EAC} and some of its extensions \cite{yang11_tkde,wang09_pr}, the CA matrix reflects the local (or \emph{direct}) relationship, i.e., the co-occurrence relationship, between objects, yet generally neglects the \emph{indirect} relationship inherent in the ensemble. Let $x_i$ and $x_j$ be two nodes in the graph. If there is a (non-zero) link between $x_i$ and $x_j$, then we say that there is a \emph{direct} relationship between them. If there exist $\xi$ nodes (with $\xi\geq 1$), say, ${x'}_{1},\cdots,{x'}_{\xi}$, such that there are links between $x_i$ and ${x'}_{1}$, between ${x'}_{\xi}$ and $x_j$, and between ${x'}_{k}$ and ${x'}_{k+1}$ for any $1\leq k <\xi$, then $x_i$ and $x_j$ are indirectly connected by the $\xi$ nodes. Here we say there is a $\xi$-step \emph{indirect} relationship between $x_i$ and $x_j$. We refer to the entirety of the direct relationships and the different steps of indirect relationships as the \emph{global structure information}. Three sample graphs are illustrated in Fig.~\ref{fig:global_local1}, Fig.~\ref{fig:global_local2}, and Fig.~\ref{fig:global_local3}, respectively. Although the direct link weights between $x_1$ and $x_2$ and between $x_2$ and $x_3$ remain unchanged across the three sample graphs in Fig.~\ref{fig:global_locals}, the global structures of the three graphs are very different. The local links may be affected by the noise and the inherent complexity of the real-world datasets, while the global structure information is more robust to the potential local errors and can provide an alternative way to explore the relationship between objects.  The relationship between objects lies not only in the direct connections, but also in the indirect connections \cite{iam_on11_linkbased}. The key problem here is how to exploit the global structure information in the ensemble effectively and efficiently and thereby improve the final clustering results.

Aiming to tackle the aforementioned two limitations, in this paper, we propose an ensemble clustering approach based on sparse graph representation and probability trajectory analysis.

We introduce the concept of microclusters as a compact representation for the ensemble data, which is able to greatly reduce the problem size and  facilitate the computation. To deal with the uncertain links, we propose a $k$-nearest-neighbor-like strategy termed elite neighbor selection (ENS) to identify the uncertain links and build a sparse graph termed $K$-elite neighbor graph ($K$-ENG) that preserves only a small number of probably reliable links. Two microclusters are elite neighbors if the link between them has a large weight (see Definition~\ref{def:K_EN}). We argue that the use of a small number of probably reliable links is capable of reflecting the overall structure of the graph and may lead to much better and more robust clustering results than using all graph links without considering their reliability. According to our experimental analysis, we find that only preserving several percent or even less than one percent of the links via the ENS strategy can result in significant improvements to the final consensus results compared to using all links in the original graph.

\begin{figure}[!t]
\begin{center}
{\subfigure[]
{\includegraphics[width=0.326\linewidth]{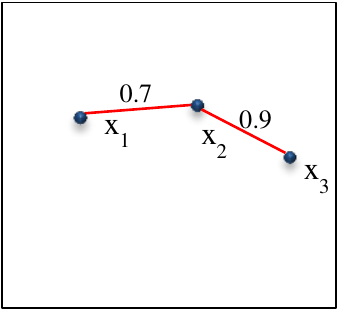}\label{fig:global_local1}}}
{\subfigure[]
{\includegraphics[width=0.326\linewidth]{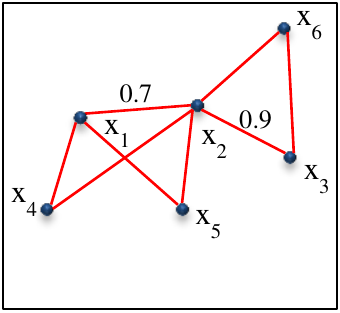}\label{fig:global_local2}}}
{\subfigure[]
{\includegraphics[width=0.326\linewidth]{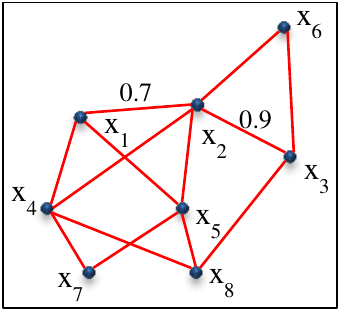}\label{fig:global_local3}}}
\caption{How are the relationships between $x_1$ and $x_2$ and between $x_2$ and $x_3$ in the context of (a) a graph with three nodes, (b) a graph with six nodes, and (c) a graph with eight nodes?}
\label{fig:global_locals}
\end{center}
\end{figure}

Having constructed the $K$-ENG graph, we proceed to exploit the global structure information to refine local links by means of random walks. The random walk technique has proved to be a powerful tool for finding community structures (or cluster structures) in the field of community discovery \cite{newman04,pons_rw_05,lai_PRE10,neiwalk14_tkde}. The random walks are performed on the sparse graph $K$-ENG and each node in $K$-ENG is treated as a start node for a random walker. We propose a new transition probability matrix that simultaneously considers the link weights and the node sizes. By analyzing the probability trajectories of the random walkers starting from different initial nodes, we derive a novel and dense similarity measure termed probability trajectory based similarity (PTS) from the sparse graph. The PTS explores the pair-wise relationships by capturing the global structure information in the sparse graph via random walk trajectories. Based on PTS, two consensus functions are further proposed, namely, probability trajectory accumulation (PTA) and probability trajectory based graph partitioning (PTGP).

The overall process of our approach is illustrated in Fig.~\ref{fig:flowchart}. Given the ensemble of base clusterings, we first map the data objects to a set of microclusters and compute the microcluster based co-association (MCA) matrix. With each microcluster treated as a node, we construct the microcluster similarity graph (MSG) according to the MCA matrix. Then, the ENS strategy is performed on the MSG and the sparse graph $K$-ENG is constructed by preserving a small number of probably reliable links. The random walks are conducted on the $K$-ENG graph and the PTS similarity is obtained by comparing random walk trajectories. Having computed the new similarity matrix, any pair-wise similarity based clustering methods can be used to achieve the consensus clustering. Typically, we propose two novel consensus functions, termed PTA and PTGP, respectively. Note that PTA is based on agglomerative clustering, while PTGP is based on graph partitioning. Figure~\ref{fig:ensize9in1} summarizes the average performances (over nine real-world datasets) of the proposed PTA and PTGP methods and four existing methods, i.e., MCLA \cite{strehl02}, GP-MGLA \cite{huang14_weac}, EAC \cite{Fred05_EAC}, and WEAC \cite{huang14_weac}, with varying ensemble sizes. Each method is run 20 times on each dataset and their average NMI scores are shown in Fig.~\ref{fig:ensize9in1}. By dealing with the uncertain links and the global structure information, the proposed PTA and PTGP methods exhibit a significant advantage in clustering robustness (to various datasets and ensemble sizes) over the baseline methods. Please see Section~\ref{sec:experiment} for more extensive details of our experimental evaluation.

\begin{figure}[!t]
\begin{center}
{
{\includegraphics[width=0.67\linewidth]{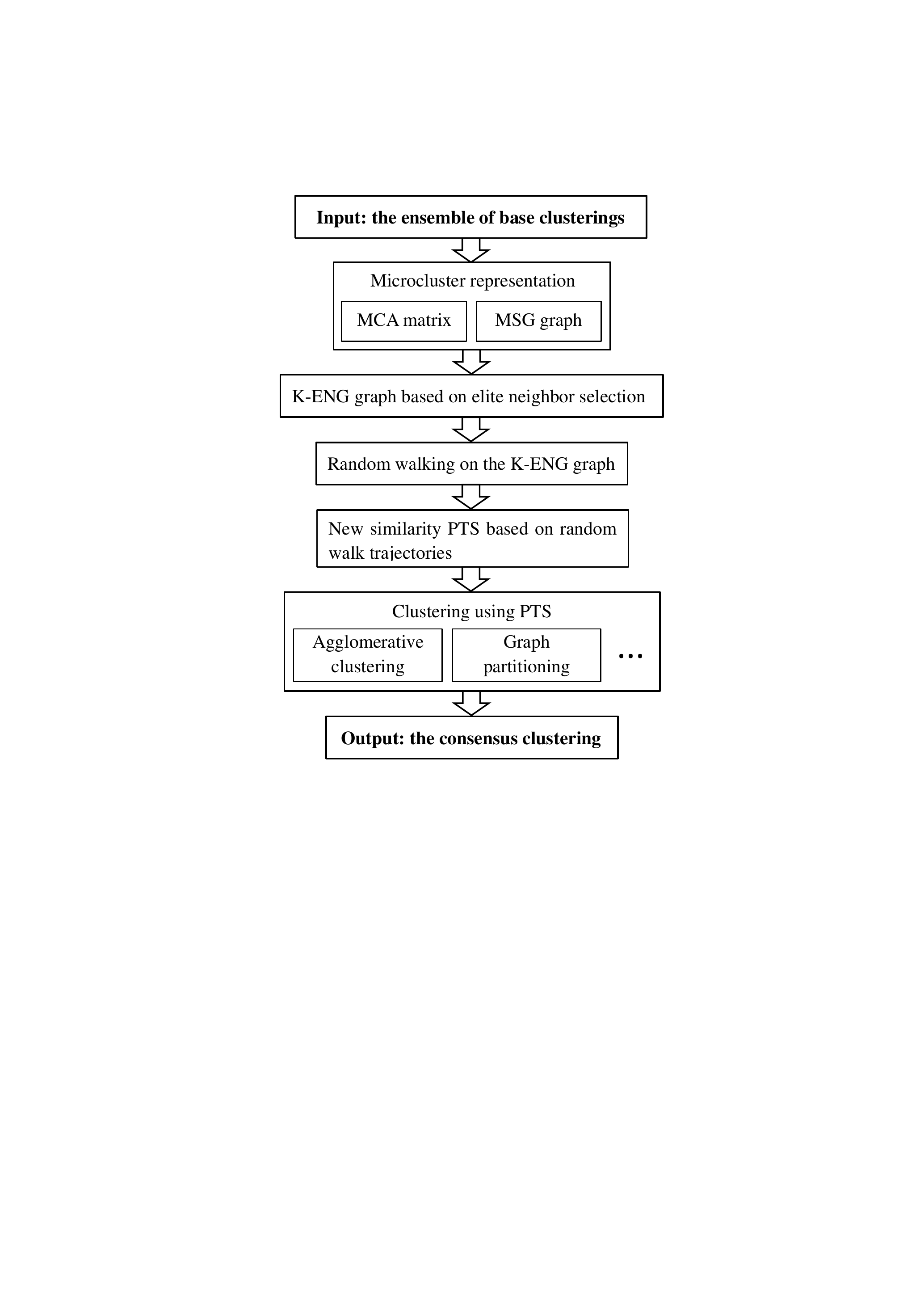}}}
\caption{Flow diagram of the proposed approach.}\label{fig:flowchart}
\end{center}
\end{figure}

The main contributions of our approach are summarized as follows:

\begin{enumerate}
  \item Our approach addresses the issue of uncertain links in an effective and efficient manner. We propose to identify the uncertain links by the ENS strategy and build a sparse graph with a small number of probably reliable links. Our empirical study shows the advantage of using only a small number of probably reliable links rather than all graph links regardless of their reliability.
  \item Our approach is able to incorporate global information to construct more accurate local links by exploiting the random walk trajectories. The random walkers driven by a new probability transition matrix are utilized to explore the graph structure. A dense similarity measure is further derived from the sparse graph $K$-ENG using probability trajectories of the random walkers.
  \item Extensive experiments are conducted on a variety of real-world datasets. The experimental results show that our approach yields significantly better performance than the state-of-the-art approaches w.r.t. both clustering accuracy and efficiency.
\end{enumerate}

\begin{figure}[!t]
\begin{center}
{
{\includegraphics[width=0.65\linewidth]{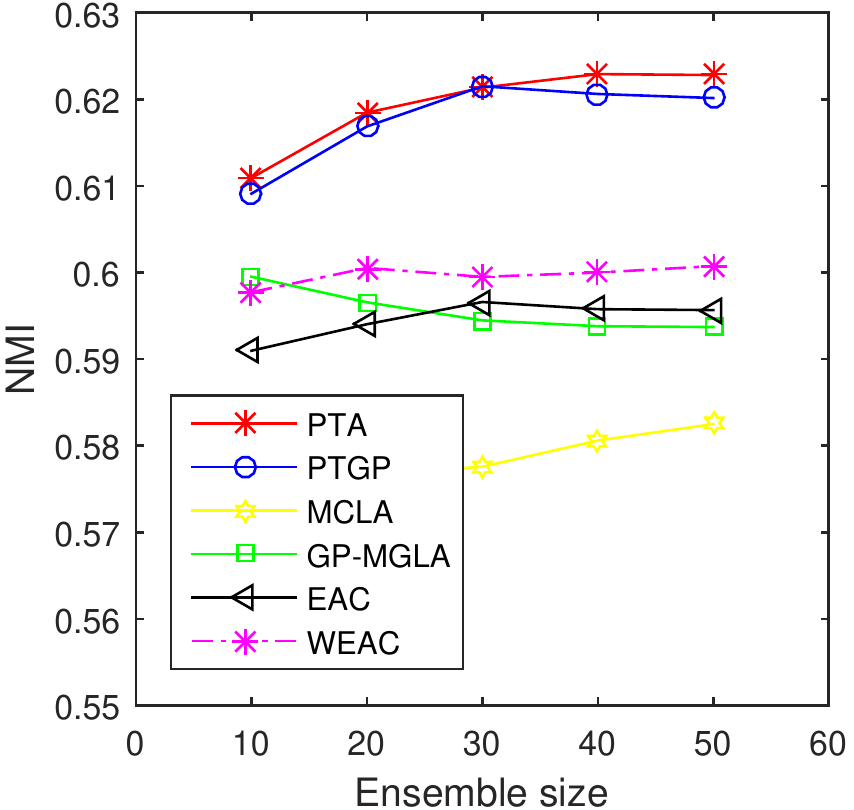}}}
\caption{Average performances (in terms of NMI) of different approaches over nine different real-world datasets with varying ensemble size $M$.}\label{fig:ensize9in1}
\end{center}
\end{figure}

The rest of this paper is organized as follows. The related work is introduced in Section~\ref{sec:related_work}. The formulation of the ensemble clustering problem is provided in Section~\ref{sec:formulation}. The proposed ensemble clustering approach is described in Section~\ref{sec:our_framework}. The experimental results are reported in Section~\ref{sec:experiment}. Section~\ref{sec:conclusion} concludes the paper.

\section{related work}
\label{sec:related_work}

There is a large amount of literature on ensemble clustering in the past few decades \cite{vega_pons11_survey}. The existing ensemble clustering approaches can be categorized into three main classes, that is, (i) the pair-wise similarity based approaches \cite{Fred05_EAC,yang11_tkde,wang09_pr,li07}, (ii) the median partition based approaches \cite{cristofor02,topchy05}, and (iii) the graph partitioning based approaches \cite{strehl02,fern04_bipartite,abdala10_icpr,ren13_icdm}.

The pair-wise similarity based approaches represent the ensemble information by some pair-wise similarity measure \cite{Fred05_EAC,yang11_tkde,wang09_pr,li07}. The evidence accumulation clustering (EAC) proposed by Fred and Jain \cite{Fred05_EAC} is probably the best-known pair-wise similarity based approach. In EAC, a co-association (CA) matrix is constructed by counting how many times two objects occur in the same cluster in the ensemble of multiple base clusterings. By treating the CA matrix as a new similarity matrix, clustering algorithms, such as the agglomerative clustering methods, can be further utilized to obtain the consensus clustering. Li et al. \cite{li07} proposed a hierarchical clustering algorithm to construct the consensus clustering using the CA matrix. The concept of normalized edges is introduced in \cite{li07} to measure the similarity between clusters. Wang et al. \cite{wang09_pr} extended the EAC method by taking the cluster sizes into consideration and proposed the probability accumulation method.

The median partition based approaches aim to find a clustering (or partition) that maximizes the similarity between this clustering and all of the base clusterings, which can be viewed as finding the median point of the base clusterings \cite{cristofor02,topchy05}. Due to the huge space of all possible clusterings, it is generally infeasible to find the optimal solution for the median partition problem. In fact, the median partition problem is NP-complete \cite{topchy05}. Cristofor and Simovici \cite{cristofor02} proposed to find an approximative solution for the ensemble clustering problem by exploiting the genetic algorithm, in which clusterings are represented as chromosomes. Topchy et al. \cite{topchy05} proposed to formulate the median partition problem into a maximum likelihood problem and solved it by the EM algorithm.

The graph partitioning based approaches are another main category of ensemble clustering \cite{ren13_icdm,strehl02,fern04_bipartite}. Strehl and Ghosh \cite{strehl02} formulated the ensemble clustering problem into a graph partitioning problem and proposed three ensemble clustering approaches: cluster-based similarity partitioning algorithm (CSPA), hypergraph partitioning algorithm (HGPA), and meta-clustering algorithm (MCLA). Fern and Brodley \cite{fern04_bipartite} formulated the clustering ensemble into a bipartite graph by treating both clusters and objects as graph nodes and obtained the consensus clustering by partitioning the bipartite graph. Ren et al. \cite{ren13_icdm} proposed to assign weights to data objects with regard to how difficult it is to cluster them and presented three graph partitioning algorithms based on the weighted-object scheme, that is, weighted-object meta clustering (WOMC), weighted-object similarity partition (WOSP) clustering, and weighted-object hybrid bipartite (WOHB) graph partition clustering.

Despite the significant success, there are still two limitations to most of the existing ensemble clustering approaches. First, the existing approaches mostly overlook the problem of uncertain links which may mislead the consensus process. Second, most of them lack the ability to incorporate global structure information to refine local links accurately and efficiently. Recently, some efforts have been made to address these two limitations. To deal with the uncertain links, Yi et al. \cite{yi_icdm12} proposed an ensemble clustering approach based on global thresholding and matrix completion. However, using global thresholds may lead to \emph{isolated} nodes, i.e., all of the links connected to a node may be cut out, due to the lack of local adaptivity. Moreover, in the approach of \cite{yi_icdm12}, a parameter $C$ is used to scale the noise term in the objective function (see \cite{yi_icdm12} for more details) and plays a sensitive and crucial role for yielding a good consensus clustering. Without knowing the ground-truth in advance, tuning the parameter $C$ for \cite{yi_icdm12} is very difficult and computationally expensive. To exploit the indirect relationships in the ensemble, Iam-On et al. \cite{iam_on11_linkbased,iamon12_tkde} proposed to refine the CA matrix by considering the shared neighbors between clusters. The approaches in \cite{iam_on11_linkbased} and \cite{iamon12_tkde} utilize the common neighborhood information, i.e., the $1$-step indirect relationships, and have not gone beyond the $1$-step indirect relationships to explore the more comprehensive structure information in the ensemble. To utilize multi-step indirect structure information, in the work of \cite{iamon08_icds}, Iam-on et al. proposed to refine pair-wise links by the SimRank similarity (SRS), which, however, suffers from its high computational complexity and is not feasible for large datasets (see Fig.~\ref{fig:time_complexity}). Different from \cite{yi_icdm12} and \cite{iamon08_icds}, in this paper, we propose to tackle these two limitations in a unified and efficient manner. We present the elite neighbor selection strategy to identify the uncertain links by locally adaptive thresholds and build a sparse graph with a small number of probably reliable links, which has shown its advantage compared to using the whole original graph without considering the reliability of the links (see Section~\ref{sec:experiment}). To explore the global structure information of the graph, we exploit the random walk process with a new transition probability matrix. By analyzing the probability trajectories of random walkers, a novel and dense similarity measure termed PTS is derived from the sparse graph. Specifically, based on the PTS, two ensemble clustering algorithms are further proposed, termed PTA and PTGP, respectively.

\section{Problem Formulation}
\label{sec:formulation}

\begin{table}[!t]
\caption{Summary of notations}\vskip -0.2 in
\label{table:notations}
\begin{center}
\begin{tabular}{c|l}
\toprule
$N$         &Number of data objects in a dataset\\
$x_i$       &Data object\\
$\mathcal{X}$   &Dataset of $N$ objects, $\mathcal{X}=\{x_1,\cdots,x_N\}$\\
$n^k$         &Number of clusters in the $k$-th base clustering\\
$C^k_j$         &The $j$-th cluster in the $k$-th base clustering\\
$\pi^k$         &The $k$-th base clustering, $\pi^k=\{C^k_1,\cdots,C^k_{n^k}\}$\\
$M$         &Number of base clusterings\\
$\Pi$         &Ensemble of $M$ base clusterings, $\Pi=\{\pi^1,\cdots,\pi^M\}$\\
$\pi^*$         &Consensus clustering\\
$N_c$       &Total number of clusters in $\Pi$\\
$C_i$       &The $i$-th cluster in $\Pi$\\
$\mathcal{C}$   &Set of clusters in $\Pi$, $\mathcal{C}=\{C_1,\cdots,C_{N_c}\}$\\
$\tilde{N}$         &Number of microclusters\\
$y_i$         &The $i$-th microcluster\\
$\mathcal{Y}$         &Set of microclusters, $\mathcal{Y}=\{y_1,\cdots,y_{\tilde{N}}\}$\\
$\tilde{n}_i$       &Number of data objects in $y_i$\\
$b_{ij}$              &Number of times $x_i$ and $x_j$ occur in the same cluster in $\Pi$\\
$a_{ij}$              &Entry of co-association (CA) matrix\\
$A$              &CA matrix, $A=\{a_{ij}\}_{N\times N}$\\
$\tilde{b}_{ij}$              &Number of times $y_i$ and $y_j$ occur in the same cluster in $\Pi$\\
$\tilde{a}_{ij}$              &Entry of microcluster based co-association (MCA) matrix\\
$\tilde{A}$              &MCA matrix, $\tilde{A}=\{\tilde{a}_{ij}\}_{\tilde{N}\times \tilde{N}}$\\
$\tilde{G}$              &Microcluster similarity graph (MSG)\\
$\tilde{V}$              &Node set of $\tilde{G}$. Note that $\tilde{V}=\mathcal{Y}$\\
$\tilde{L}$              &Link set of $\tilde{G}$\\
$\tilde{w}_{ij}$         &Weight between two nodes in $\tilde{G}$\\
$\bar{G}$              &$K$-elite neighbor graph ($K$-ENG)\\
$\bar{V}$              &Node set of $\bar{G}$. Note that $\tilde{V}=\mathcal{Y}$\\
$\bar{L}$              &Link set of $\bar{G}$\\
$\bar{w}_{ij}$              &Weight between two nodes in $\bar{G}$\\
$p_{ij}$              &(1-step) transition probability from $y_i$ to $y_j$\\
$P$                   &(1-step) transition probability matrix, $P=\{p_{ij}\}_{\tilde{N}\times\tilde{N}}$\\
$p^T_{ij}$              &$T$-step transition probability from $y_i$ to $y_j$\\
$P^T$                   &$T$-step transition probability matrix, $P^T=\{p^T_{ij}\}_{\tilde{N}\times\tilde{N}}$\\
$p^T_{i:}$              &The $i$-th row of $P^T$, $p^T_{i:}=\{p^T_{i1},\cdots,p^T_{i\tilde{N}}\}$\\
$PT^T_i$            &Probability trajectory of a random walker starting from\\
                    &node $y_i$ with length $T$\\
$PTS_{ij}$      &Probability trajectory based similarity between $y_i$ and $y_j$\\
$\mathcal{R}^{(0)}$ &Set of the initial regions for PTA,\\ &$\mathcal{R}^{(0)}=\{R^{(0)}_1,\cdots,R^{(0)}_{|\mathcal{R}^{(0)}|}\}$\\
$S^{(0)}$           &Initial similarity matrix for PTA,\\ &$S^{(0)}=\{s^{(0)}_{ij}\}_{|\mathcal{R}^{(0)}|\times |\mathcal{R}^{(0)}|}$\\
$\mathcal{R}^{(t)}$ &Set of the $t$-step regions for PTA,\\ &$\mathcal{R}^{(t)}=\{R^{(t)}_1,\cdots,R^{(t)}_{|\mathcal{R}^{(t)}|}\}$\\
$S^{(t)}$           &The $t$-step similarity matrix for PTA,\\ &$S^{(t)}=\{s^{(t)}_{ij}\}_{|\mathcal{R}^{(t)}|\times |\mathcal{R}^{(t)}|}$\\
$\ddot{G}$       &Microcluster-cluster bipartite graph (MCBG)\\
$\ddot{N}$       &Number of nodes in $\ddot{G}$\\
$\ddot{V}$       &Node set of $\ddot{G}$\\
$\ddot{L}$       &Link set of $\ddot{G}$\\
$\ddot{w}_{ij}$  &Weight between two nodes in $\ddot{G}$\\
\bottomrule
\end{tabular}
\end{center}
\end{table}

In Table~\ref{table:notations}, we summarize the notations that are used throughout the paper. Let $\mathcal{X}=\{x_1,\cdots,x_N\}$ be a dataset of $N$ objects. Given $M$ partitions of $\mathcal{X}$, each treated as a base clustering, the goal is to find a consensus clustering $\pi^*$ that summarizes the information of the ensemble. The ensemble is denoted as $\Pi=\{\pi^1,\cdots,\pi^M\}$, where $\pi^k=\{C^k_1,\cdots,C^k_{n^k}\}$ is the $k$-th base clustering. Let $Cls^k(x_i)$ be the cluster in $\pi^k$ that contains object $x_i$. If $x_i\in C^k_j$, then $Cls^k(x_i)=C^k_j$. Let $\mathcal{C}=\{C_1,\cdots,C_{N_c}\}$ be the set of clusters in all of the $M$ base clusterings. Obviously, $N_c=\sum_{k=1}^M n^k$.

With regard to the difference in the input information, there are two formulations of the ensemble clustering problem. In the first formulation, the ensemble clustering system takes both the clustering ensemble $\Pi$ and the data features of $\mathcal{X}$ as inputs \cite{vega_pons10,vega_pons_PRL11}. In the other formulation, the ensemble clustering system takes only the clustering ensemble $\Pi$ as input and has no access to the original feature vectors of the dataset \cite{Fred05_EAC,iam_on11_linkbased,iamon12_tkde,yi_icdm12}. In this paper, we comply with the latter formulation of the ensemble clustering problem, which is also the common practice for most of the existing ensemble clustering algorithms \cite{vega_pons11_survey}. That is, in our formulation, the \emph{input} of the ensemble clustering system is the ensemble $\Pi$, and the \emph{output} is the consensus clustering $\pi^*$.

\section{Ensemble Clustering Using Probability Trajectories}
\label{sec:our_framework}

In this section, we describe the proposed ensemble clustering approach based on sparse graph representation and probability trajectory analysis.

The overall process of the proposed approach is illustrated in Fig.~\ref{fig:flowchart}. The microclusters are used as a compact representation for the clustering ensemble. The microcluster based co-association (MCA) matrix for the ensemble is computed and a microcluster similarity graph (MSG) is constructed from the MCA matrix with the microclusters treated as graph nodes. In order to deal with the uncertain links, we propose a $k$-nearest-neighbor-like pruning strategy termed elite neighbor selection (ENS), which is able to identify the uncertain links by locally adaptive thresholds. A sparse graph termed $K$-elite neighbor graph ($K$-ENG) is then constructed with only a small number of probably reliable links. The ENS strategy is a crucial step in our approach. We argue that using a small number of probably reliable links may lead to significantly better consensus results than using all graph links regardless of their reliability. The random walk process driven by a new transition probability matrix is performed on the $K$-ENG  to explore the global structure information. From the sparse graph $K$-ENG, a dense similarity measure termed PTS is derived by exploiting the probability trajectories of the random walkers. Two novel consensus functions are further proposed, termed PTA and PTGP, respectively. In the following, we describe each step of our framework in detail.

\begin{table}[!t]
\caption{An example of the microcluster representation}\vskip -0.1in
\label{table:mc_example}
\begin{center}
\begin{tabular}{cccc}
\toprule
\multirow{2}{*}{Data objects}             &\multicolumn{2}{c}{Cluster labels}  &\multirow{2}{*}{Microcluster labels}\\
\cline{2-3}
&\begin{minipage}{0.7cm} \vspace{0.1cm}\centering$\pi^1$  \vspace{0.01cm}\end{minipage}
                      &\begin{minipage}{0.7cm} \vspace{0.1cm}\centering$\pi^2$  \vspace{0.01cm}\end{minipage}          \\
\midrule
\begin{minipage}{1.4cm} \vspace{0.15cm}\centering $x_1$ \vspace{0.15cm}\end{minipage}&$C_1^1$&$C_1^2$&$y_1$\\
\begin{minipage}{1.4cm} \vspace{0.15cm}\centering $x_2$ \vspace{0.15cm}\end{minipage}&$C_1^1$&$C_1^2$&$y_1$\\
\begin{minipage}{1.4cm} \vspace{0.15cm}\centering $x_3$ \vspace{0.15cm}\end{minipage}&$C_1^1$&$C_1^2$&$y_1$\\
\begin{minipage}{1.4cm} \vspace{0.15cm}\centering $x_4$ \vspace{0.15cm}\end{minipage}&$C_1^1$&$C_2^2$&$y_2$\\
\begin{minipage}{1.4cm} \vspace{0.15cm}\centering $x_5$ \vspace{0.15cm}\end{minipage}&$C_2^1$&$C_2^2$&$y_3$\\
\begin{minipage}{1.4cm} \vspace{0.15cm}\centering $x_6$ \vspace{0.15cm}\end{minipage}&$C_2^1$&$C_2^2$&$y_3$\\
\begin{minipage}{1.4cm} \vspace{0.15cm}\centering $x_7$ \vspace{0.15cm}\end{minipage}&$C_2^1$&$C_3^2$&$y_4$\\
\begin{minipage}{1.4cm} \vspace{0.15cm}\centering $x_8$ \vspace{0.15cm}\end{minipage}&$C_2^1$&$C_3^2$&$y_4$\\
\bottomrule
\end{tabular}
\end{center}
\end{table}

\begin{figure}[!t]
\begin{center}
{\subfigure[]
{\includegraphics[width=0.37\linewidth]{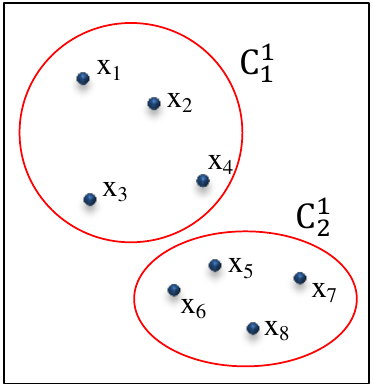}\label{fig:mc_example1}}}
{\subfigure[]
{\includegraphics[width=0.37\linewidth]{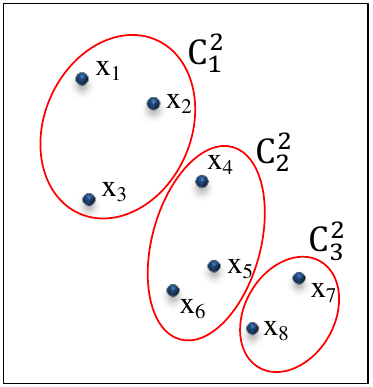}\label{fig:mc_example2}}}
{\subfigure[]
{\includegraphics[width=0.37\linewidth]{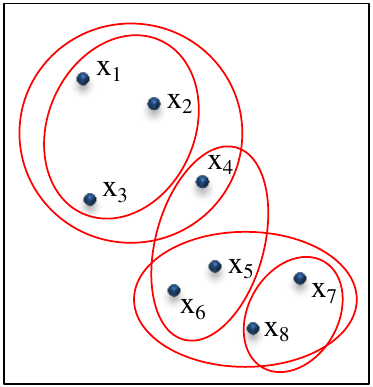}\label{fig:mc_example3}}}
{\subfigure[]
{\includegraphics[width=0.37\linewidth]{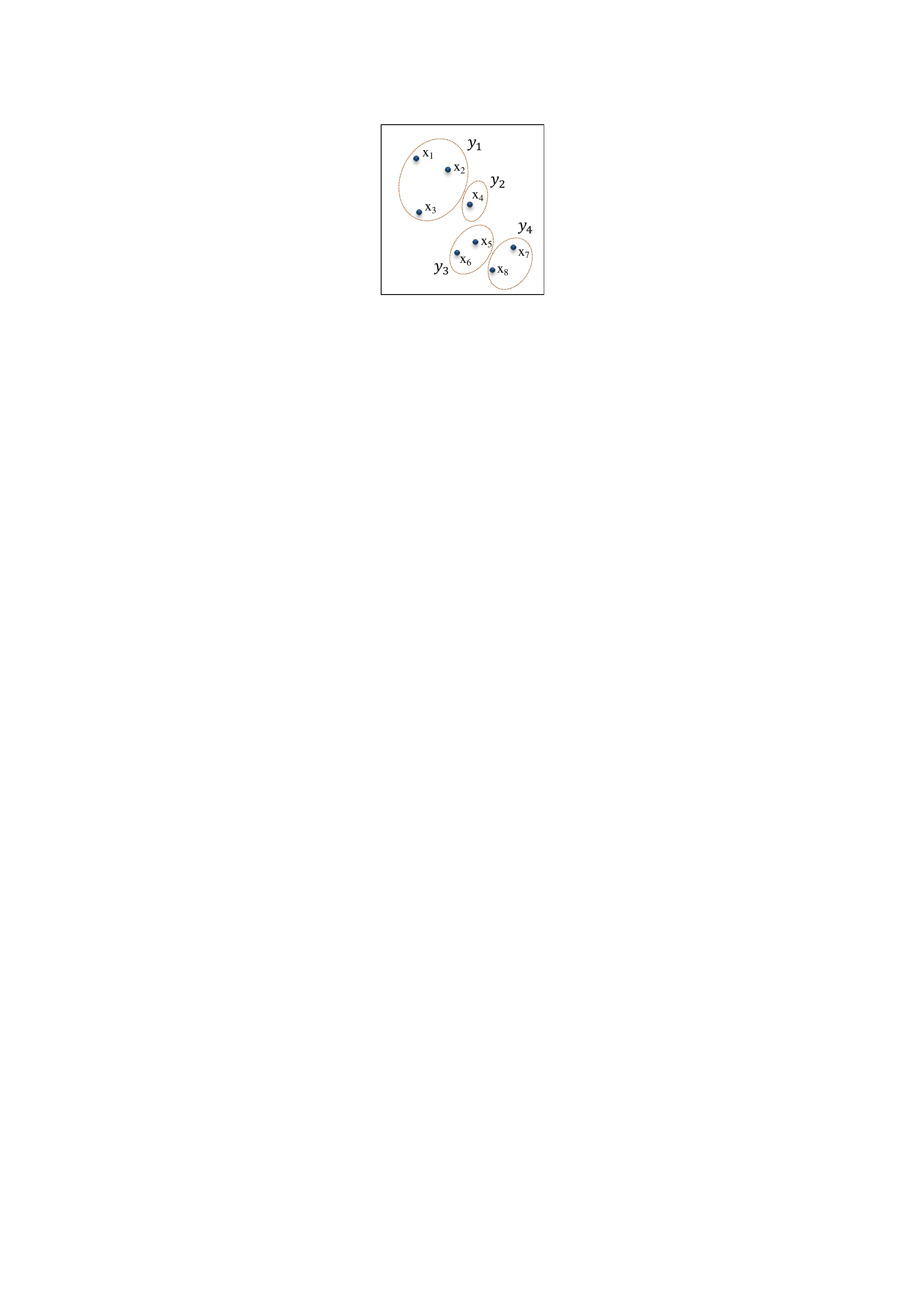}\label{fig:mc_example4}}}
\caption{Illustration of the example in Table~\ref{table:mc_example}. (a) The base clustering $\pi^1$. (b) The base clustering $\pi^2$. (c) The intersection of $\pi^1$ and $\pi^2$. (d) The microcluster representation for the ensemble of $\pi^1$ and $\pi^2$.}
\label{fig:mc_examples}
\end{center}
\end{figure}

\subsection{Microcluster Based Representation} 
\label{sec:MCA}
In this paper, we propose to discover the object relationship by analyzing the probability trajectories of the random walkers. One hurdle in conducting random walks is the computational complexity. In a graph of $N$ nodes, it takes $O(TN^2)$ operations to calculate $T$ steps of random walks, which limits its application in large datasets. A practical strategy is to use a larger granularity than the original objects to reduce the number of graph nodes. In the field of image segmentation, superpixels are often adopted as the primitive objects \cite{CVPR12_Li}. However, with neither the spatial constraints of the image data nor access to the original data features, the conventional superpixel segmentation methods \cite{CVPR12_Li} are not applicable for generating the primitive segments for the ensemble clustering problem. In this paper, we introduce the concept of microclusters as a compact representation for the ensemble. The objects $x_i$ and $x_j$ are defined to be in the same microcluster if and only if they occur in the same cluster for \emph{all} of the $M$ base clusterings, i.e., for $k=1,\cdots,M$, $Cls^k(x_i)=Cls^k(x_j)$.

\begin{mydef}
\label{def:MC}
Let $y$ be a set of objects. The set $y$ is a microcluster if and only if (i) $\forall x_i, x_j\in y$ and $k\in\mathbb{Z}$ s.t. $1\leq k \leq M$, $Cls^k(x_i)=Cls^k(x_j)$, and (ii) $\forall x_i\in y$ and $x_j \not\in y$, $\exists k\in\mathbb{Z}$ s.t. $1\leq k \leq M$, $Cls^k(x_i)\neq Cls^k(x_j)$.
\end{mydef}

Given the clustering ensemble, we can produce a set of $\tilde{N}$ non-overlapping microclusters, denoted as
\begin{equation}
\mathcal{Y}=\{y_1,\cdots,y_{\tilde{N}}\}.
\end{equation}
Intuitively, the set of microclusters is produced by intersecting the $M$ base clusterings. As it is defined, there are no clues to distinguish the objects in the same microcluster given the information of the ensemble $\Pi$. In our work, the microclusters are utilized as the primitive objects. For any microcluster $y_i\in\mathcal{Y}$ and cluster $C_j^k\in \pi^k$, it holds that \emph{either} every object in $y_i$ is in $C_j^k$ \emph{or} no object in $y_i$ is in $C_j^k$. Because the microclusters are treated as primitive objects, in the following, if all objects in $y_i$ are in $C_j^k$, we write it as $y_i\in C_j^k$ rather than $y_i\subseteq C_j^k$; otherwise, we write it as $y_i\not\in C_j^k$ rather than $y_i\not\subseteq C_j^k$. Let $Cls^k(y_i)$ denote the cluster in $\pi^k$ that contains the microcluster $y_i$. If $y_i\in C^k_j$, then $Cls^k(y_i) = C^k_j$.

In Table~\ref{table:mc_example} and Fig.~\ref{fig:mc_examples}, we show an example of a dataset with eight objects to describe the construction of the microclusters. There are two base clusterings in the example, namely, $\pi^1$ and $\pi^2$, which consist of two and three clusters, respectively. With regard to Definition~\ref{def:MC}, four microclusters can be produced using $\pi^1$ and $\pi^2$, which correspond to the intersection of $\pi^1$ and $\pi^2$ (as illustrated in Fig.~\ref{fig:mc_example3} and \ref{fig:mc_example4}).

\begin{mydef}
\label{def:CA}
The co-association (CA) matrix of the clustering ensemble $\Pi$ is defined as
\begin{equation}
A=\{a_{ij}\}_{N\times N},
\end{equation}
where
\begin{equation}
a_{ij}=\frac{b_{ij}}{M}
\end{equation}
and $b_{ij}$ denotes how many times the objects $x_i$ and $x_j$ occur in the same cluster among the $M$ base clusterings.
\end{mydef}

\begin{mydef}
\label{def:MCA}
The microcluster based co-association (MCA) matrix of the clustering ensemble $\Pi$ is defined as
\begin{equation}
\label{eq:MCA}
\tilde{A}=\{\tilde{a}_{ij}\}_{\tilde{N}\times \tilde{N}},
\end{equation}
where
\begin{equation}
\tilde{a}_{ij}=\frac{\tilde{b}_{ij}}{M}
\end{equation}
and $\tilde{b}_{ij}$ denotes how many times the microclusters $y_i$ and $y_j$ occur in the same cluster among the $M$ base clusterings.
\end{mydef}

\begin{mythm}
\label{thm:MCA_CA}
For all $x_i,x_j\in\mathcal{X},y_k,y_l\in\mathcal{Y}$ such that $x_i\in y_k$ and $x_j\in y_l$, it holds that $a_{ij}=\tilde{a}_{kl}$.
\end{mythm}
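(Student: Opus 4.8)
The plan is to reduce the claimed identity on the normalized co-association entries to an identity on the raw counts $b_{ij}$ and $\tilde b_{kl}$, and then establish the latter by a term-by-term comparison over the $M$ base clusterings. Concretely, by Definition~\ref{def:CA} and Definition~\ref{def:MCA} we have $a_{ij}=b_{ij}/M$ and $\tilde a_{kl}=\tilde b_{kl}/M$, so it suffices to prove that $b_{ij}=\tilde b_{kl}$ whenever $x_i\in y_k$ and $x_j\in y_l$.

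First I would rewrite both counts as cardinalities of index sets: $b_{ij}=\bigl|\{\,m:1\le m\le M,\ Cls^m(x_i)=Cls^m(x_j)\,\}\bigr|$ and $\tilde b_{kl}=\bigl|\{\,m:1\le m\le M,\ Cls^m(y_k)=Cls^m(y_l)\,\}\bigr|$; the latter is well-defined because, by Definition~\ref{def:MC}(i) and the paragraph introducing $Cls^m(y_i)$, all objects of a microcluster lie in one common cluster of $\pi^m$. The core of the argument is the pointwise equivalence, for each fixed $m$,
\[
Cls^m(x_i)=Cls^m(x_j)\iff Cls^m(y_k)=Cls^m(y_l),
\]
which follows once we observe that $x_i\in y_k$ forces $Cls^m(x_i)=Cls^m(y_k)$ (and symmetrically $Cls^m(x_j)=Cls^m(y_l)$): by definition $Cls^m(y_k)$ is the cluster of $\pi^m$ containing every object of $y_k$, hence in particular containing $x_i$, and since $\pi^m$ is a partition this cluster is exactly $Cls^m(x_i)$. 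Substituting these two equalities into either side of the equivalence makes it immediate.

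Given the pointwise equivalence, the two index sets coincide, so their cardinalities agree, yielding $b_{ij}=\tilde b_{kl}$ and therefore $a_{ij}=\tilde a_{kl}$. This is essentially a bookkeeping exercise in unwinding Definitions~\ref{def:MC}, \ref{def:CA}, and~\ref{def:MCA}; the only place that warrants a little care is the identification $Cls^m(x_i)=Cls^m(y_k)$, which hinges on microclusters being contained in every base cluster (Definition~\ref{def:MC}(i)) and on each object (resp. microcluster) lying in a \emph{unique} cluster of a given base clustering. I do not anticipate any genuine obstacle beyond spelling out that identification explicitly.
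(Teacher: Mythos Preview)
Your proposal is correct and follows essentially the same approach as the paper: both reduce to showing $b_{ij}=\tilde b_{kl}$ by comparing, for each base clustering $\pi^m$, whether the object pair and the microcluster pair co-occur. The only cosmetic difference is that the paper phrases this as two implications yielding $b_{ij}\ge\tilde b_{kl}$ and $b_{ij}\le\tilde b_{kl}$, whereas you establish the pointwise equivalence directly via the identification $Cls^m(x_i)=Cls^m(y_k)$; your version is arguably a touch cleaner but the content is identical.
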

\begin{proof}
Given $x_i,x_j\in\mathcal{X},y_k,y_l\in\mathcal{Y}$, we have $a_{ij}=b_{ij}/M$ and $\tilde{a}_{kl}=\tilde{b}_{kl}/M$ according to Definitions~\ref{def:CA} and \ref{def:MCA}. To prove $a_{ij}=\tilde{a}_{kl}$, we need to prove $b_{ij}=\tilde{b}_{kl}$. Given $x_i\in y_k$ and $x_j\in y_l$, for any base clustering $\pi^m\in \Pi$, if $y_k$ and $y_l$ are in the same cluster in $\pi^m$, then $x_i$ and $x_j$ are in the same cluster in $\pi^m$. Thus we have $b_{ij}\geq \tilde{b}_{kl}$. If $y_k$ and $y_l$ are in different clusters in $\pi^m$, then $x_i$ and $x_j$ are in different clusters. Thus we have $b_{ij}\leq\tilde{b}_{kl}$. Because $b_{ij}\geq \tilde{b}_{kl}$ and $b_{ij}\leq\tilde{b}_{kl}$, we have $b_{ij}=\tilde{b}_{kl}$, which leads to $a_{ij}=\tilde{a}_{kl}$.
\end{proof}

Similar to the conventional CA matrix \cite{Fred05_EAC} (see Definition~\ref{def:CA}), the microcluster based co-association (MCA) matrix is computed by considering how many times two microclusters occur in the same cluster in $\Pi$. Then the microcluster similarity graph (MSG) is constructed based on the MCA matrix (see Definition~\ref{def:MSG}). By using the microclusters, the size of the similarity graph is reduced from $N$ to $\tilde{N}$.

\begin{mydef}
\label{def:MSG}
The microcluster similarity graph (MSG) is defined as
\begin{equation}
\tilde{G}=(\tilde{V},\tilde{L}),
\end{equation}
where $\tilde{V}=\mathcal{Y}$ is the node set and $\tilde{L}$ is the link set. The weight of the link between the nodes $y_i$ and $y_j$ is defined as
\begin{equation}
\tilde{w}_{ij}=\tilde{a}_{ij}.
\end{equation}
\end{mydef}

\subsection{Elite Neighbor Selection}
\label{sec:ENS}
In this section, we introduce the elite neighbor selection (ENS) strategy to deal with the problem of uncertain links. The uncertain links are the connections in the similarity graph that are of low confidence (typically with small weights). One key issue here is how to decide a \emph{proper} threshold to classify low-confidence and high-confidence and thereby identify the uncertain links in the graph. Yi et al. \cite{yi_icdm12} proposed to use global thresholds to identify the uncertain entries in the similarity matrix, which, however, has several drawbacks in practical applications. First, it neglects the local structure of the similarity graph and may lead to isolated nodes. Second, it is also a difficult task to find a proper global threshold for different clustering ensembles due to the inherent complexity of real-world datasets. Instead of using global thresholds, in this paper, we propose a $k$-nearest-neighbor-like strategy, termed ENS, to identify the uncertain links by locally adaptive thresholds.

\begin{mydef}
\label{def:K_ENT}
The $K$-elite neighbor threshold for a node $y_i$ in the MSG is defined as the value of the $K$-th largest link weight connected to $y_i$, which is denoted as $Thres_{K}(y_i)$.
\end{mydef}

Having constructed the MSG graph (see Definition~\ref{def:MSG}), we define the $K$-elite neighbor threshold, denoted as $Thres_{K}(y_i)$, for each node $y_i$ in the graph (see Definition~\ref{def:K_ENT}). Given two node $y_i$ and $y_j$ in the MSG graph, if $y_i$ is one of the top-$K$ neighbors of $y_j$ OR $y_j$ is one of the top-$K$ neighbors of $y_i$, then $y_i$ and $y_j$ are referred to as the $K$-elite neighbors for each other. Formally, the definition of the $K$-elite neighbors is given in Definitions~\ref{def:K_EN}. Note that the $K$-elite neighbor relationship is symmetric, i.e., $y_i\in K$-EN($y_j$) is equivalent to $y_j\in K$-EN($y_i$) (see Theorem~\ref{thm:K-EN_sym}).

\begin{mydef}
\label{def:K_EN}
Given two nodes $y_i$ and $y_j$ in the MSG graph, $y_i$ is a $K$-elite neighbor ($K$-EN) of $y_j$ if and only if $\tilde{w}_{ij}\geq$ $Thres_{K}(y_i)$ or $\tilde{w}_{ij}\geq$ $Thres_{K}(y_j)$. The set of the $K$-elite neighbors ($K$-ENs) of $y_i$ is denoted as $K$-EN($y_i$).
\end{mydef}

\begin{mythm}
\label{thm:K-EN_sym}
The $K$-elite neighbor relationship is symmetric, i.e., for all $y_i,y_j\in\mathcal{Y}$, $y_i$ is a $K$-elite neighbor of $y_j$ if and only if $y_j$ is a $K$-elite neighbor of $y_i$ .
\end{mythm}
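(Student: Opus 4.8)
The plan is to reduce the claimed symmetry to the symmetry of the link weights $\tilde{w}_{ij}$ together with the commutativity of the logical ``or'' appearing in Definition~\ref{def:K_EN}. First I would record the auxiliary fact that $\tilde{w}_{ij}=\tilde{w}_{ji}$ for all $y_i,y_j\in\mathcal{Y}$. This is immediate from the definitions: by Definition~\ref{def:MSG} we have $\tilde{w}_{ij}=\tilde{a}_{ij}$, by Definition~\ref{def:MCA} we have $\tilde{a}_{ij}=\tilde{b}_{ij}/M$, and $\tilde{b}_{ij}$ --- the number of base clusterings in which $y_i$ and $y_j$ lie in a common cluster --- is manifestly symmetric in the pair $\{y_i,y_j\}$, so $\tilde{b}_{ij}=\tilde{b}_{ji}$ and hence $\tilde{w}_{ij}=\tilde{w}_{ji}$.

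With this in hand the argument is a short unwinding of definitions. I would fix $y_i,y_j\in\mathcal{Y}$, assume that $y_i$ is a $K$-elite neighbor of $y_j$, and apply Definition~\ref{def:K_EN}: this says $\tilde{w}_{ij}\geq Thres_{K}(y_i)$ or $\tilde{w}_{ij}\geq Thres_{K}(y_j)$. Replacing $\tilde{w}_{ij}$ by $\tilde{w}_{ji}$ via the auxiliary fact and reordering the two disjuncts yields $\tilde{w}_{ji}\geq Thres_{K}(y_j)$ or $\tilde{w}_{ji}\geq Thres_{K}(y_i)$, which is precisely the condition in Definition~\ref{def:K_EN} for $y_j$ to be a $K$-elite neighbor of $y_i$. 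The converse implication is obtained by exchanging the roles of $i$ and $j$, and the two implications together give the equivalence, hence the symmetry of the relation.

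There is no genuine obstacle here: the statement is essentially a consistency check that Definition~\ref{def:K_EN}, although phrased asymmetrically in $Thres_{K}(y_i)$ and $Thres_{K}(y_j)$, is in fact symmetric because both threshold inequalities are tested against the same symmetric weight $\tilde{w}_{ij}$ and combined by ``or''. The only point that deserves to be made explicit rather than taken for granted is the symmetry $\tilde{w}_{ij}=\tilde{w}_{ji}$ of the MCA-induced weights; everything else is propositional logic. If one wishes to be fully pedantic, one may also note that $Thres_{K}(y_i)$ in Definition~\ref{def:K_ENT} depends only on the multiset of weights incident to $y_i$, so it is well defined and the substitution above is legitimate.
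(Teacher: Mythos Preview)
Your argument is correct and follows essentially the same route as the paper's proof: both simply unfold Definition~\ref{def:K_EN} and observe that the disjunction of the two threshold conditions is symmetric in $i$ and $j$. Your version is slightly more explicit in spelling out why $\tilde{w}_{ij}=\tilde{w}_{ji}$, which the paper takes for granted, but the underlying idea is identical.
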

\begin{proof}
Given $y_i\in K$-EN($y_j$), it holds that $\tilde{w}_{ij}\geq$ $Thres_{K}(y_i)$ or $\tilde{w}_{ij}\geq$ $Thres_{K}(y_j)$. Thus we have $y_j\in K$-EN($y_i$) according to Definition~\ref{def:K_EN}. Given $y_i\not\in K$-EN($y_j$), it holds that $\tilde{w}_{ij}<$ $Thres_{K}(y_i)$ and $\tilde{w}_{ij}<$ $Thres_{K}(y_j)$. Thus we have $y_j\not\in K$-EN($y_i$).
\end{proof}

\begin{mydef}
\label{def:K_ENG}
The $K$-elite neighbor graph ($K$-ENG) is defined as
\begin{equation}
\bar{G}=(\bar{V},\bar{L}),
\end{equation}
where $\bar{V}=\mathcal{Y}$ is the node set and $\bar{L}$ is the link set. The weight of the link between the nodes $y_i$ and $y_j$ is defined as
\begin{equation}
\bar{w}_{ij}=\begin{cases}
\tilde{w}_{ij}, &\text{if $y_i\in K$-EN($y_j$),}\\
0,&\text{otherwise}.
\end{cases}
\end{equation}
\end{mydef}

The $K$-elite neighbor graph ($K$-ENG) is constructed by preserving a certain number of probably reliable links in the MSG. The link between two nodes, say, $y_i$ and $y_j$, is preserved if and only if $y_i$ is a $K$-elite neighbor of $y_j$, i.e., $y_i\in K$-EN($y_j$). In our experimental study, we have shown that setting $K$ to a small value, e.g., in the interval of $[5,20]$, which preserves only several percent or even less than one percent of the links in the MSG, can lead to much better and more robust clustering results than preserving a great portion or even all of the links. In the following steps, the small number of probably reliable links are exploited by the random walk process and a dense pair-wise measure is derived from the sparse graph $K$-ENG.

\subsection{From Sparse Graph to Dense Similarity}
\label{sec:link_propagation}
To discover the cluster structure from the sparse graph $K$-ENG, we use the random walks to explore the graph and propose to derive a dense pair-wise similarity measure based on the probability trajectories of random walkers. A random walker is a dynamic process that randomly transits from one node to one of its neighbors with a certain transition probability. The random walk technique has been widely used in the field of community discovery \cite{newman04,pons_rw_05,lai_PRE10,neiwalk14_tkde} due to its ability of finding community structures, or cluster structures, in a graph. As pointed out in \cite{lai_PRE10}, the random walkers that start from the same cluster are more likely to have similar patterns when visiting the graph. In other words, the random walkers that start from the same cluster are more likely to have similar trajectories when they randomly walk on the graph than the random walkers that start from different clusters. Based on the random walk technique, in this paper, we propose to discover the latent relationships between graph nodes by analyzing the probability trajectories of the random walkers that start from different nodes.

The nodes in the $K$-ENG are microclusters, each of which consists of a certain number of original objects. Here, we refer to the number of objects in a microcluster as the size of the microcluster. In a link-weighted graph, the transition probability from a node to one of its neighbors is generally proportional to the weight of the link between them \cite{newman04,pons_rw_05,lai_PRE10}. In the graph model of \cite{newman04}, \cite{pons_rw_05}, and \cite{lai_PRE10}, all of the nodes are identical, which is different from the $K$-ENG in that the nodes in $K$-ENG may be of different sizes or even significantly different sizes. There is a need to distinguish the nodes of different sizes when deciding the transition probabilities.

\begin{figure}[!t]
\begin{center}
{\subfigure[]
{\includegraphics[width=0.4\linewidth]{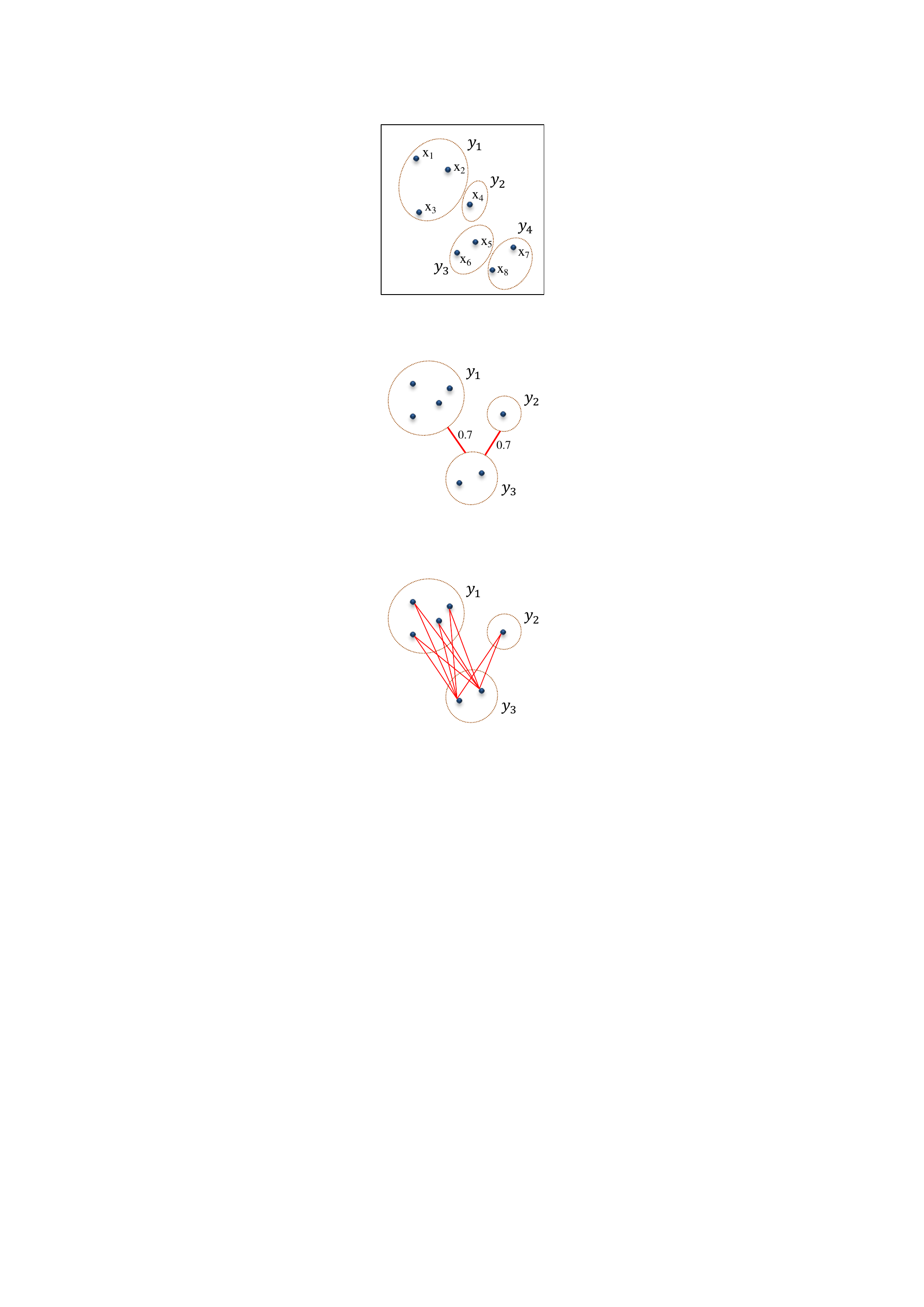}\label{fig:node_weight1}}}
{\subfigure[]
{\includegraphics[width=0.4\linewidth]{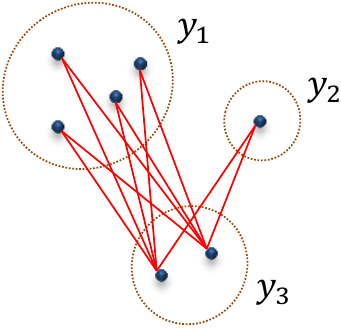}\label{fig:node_weight2}}}
\caption{Illustration of a graph with three microclusters as nodes. (a) The microcluster-microcluster links. (b) The (hidden) object-object links.}
\label{fig:node_weights}
\end{center}
\end{figure}

In Fig.~\ref{fig:node_weight1}, we illustrate a graph with three nodes, each being a microcluster. The nodes $y_1$, $y_2$, and $y_3$ consist of four, one, and two objects, respectively. There are two (microcluster-microcluster) links in this graph which are equally weighted at $0.7$. When treating the three nodes equally, the transition probabilities from $y_3$ to $y_1$ and from $y_3$ to $y_2$ are equally $0.5$, i.e., $p_{31}=p_{32}=0.7/(0.7+0.7)=0.5$. However, the microcluster-microcluster links are representative of a certain number of hidden object-object links. With respect to the object-object relationships, there are $4\times 2=8$ and $1\times 2 = 2$ hidden object-object links between $y_1$ and $y_3$ and between $y_2$ and $y_3$, respectively (see Fig.~\ref{fig:node_weight2}). The weights of the object-object links correspond to the entries of the CA matrix (see Definition~\ref{def:CA}), whereas the weights of the microcluster-microcluster links correspond to the entries of the MCA matrix (see Definition~\ref{def:MCA}). According to Theorem~\ref{thm:MCA_CA}, the weight of every object-object link between $y_1$ and $y_3$ (or between $y_2$ and $y_3$) is equal to that of the microcluster-microcluster link between $y_1$ and $y_3$ (or between $y_2$ and $y_3$), that is to say, the weight of every object-object link in the graph illustrated in Fig.~\ref{fig:node_weight2} is equal to $0.7$. If we perform random walks on the object granularity rather than the microcluster granularity, the probability of walking from one of the objects in $y_3$ to one of the objects in $y_1$ would be four times as great as the probability of walking from one of the objects in $y_3$ to one of the objects in $y_2$.

To reflect the hidden object-object connections in a microcluster based graph, the construction of the transition probability matrix needs to take into consideration the sizes of the microclusters. Specifically, the transition probability from a microcluster node, say, $y_i$, to one of its neighbors should be proportional to the size of this neighbor if the weights of the links between $y_i$ and all of its neighbors are equal. In our scenario, both the weights between $y_i$ and its neighbors and the sizes of its neighbors may be different. Let $\tilde{n}_i$ be the size of a microcluster $y_i$. The transition probability from $y_i$ to one of its neighbors, say, $y_j$, is defined to be proportional to the sum of the weights of all hidden object-object links between $y_i$ and $y_j$, i.e., $\tilde{n}_i\cdot \tilde{n}_j\cdot \bar{w}_{ij}$. Because $\tilde{n}_i$ is a constant given the node $y_i$, the probability of walking from $y_i$ to one of its neighbors $y_j$ therefore should be proportional to $\tilde{n}_j\cdot \bar{w}_{ij}$. Formally, the definition of the transition probability matrix on the $K$-ENG is given in Definition~\ref{def:transit_matrix}.

\begin{mydef}
\label{def:transit_matrix}
Let $P=\{p_{ij}\}_{\tilde{N}\times\tilde{N}}$ be the transition probability matrix of the random walk on the $K$-ENG. The transition probability from node $y_i$ to node $y_j$ is defined as
\begin{equation}
\label{eq:transit_matrix}
p_{ij}=\frac{\tilde{n}_j\cdot\bar{w}_{ij}}{\sum_{k\neq i}\tilde{n}_k\cdot\bar{w}_{ik}},
\end{equation}
where $\tilde{n}_j$ is the number of objects in $y_j$.
\end{mydef}

The random walk process is driven by the transition probability matrix P. Let $P^T=\{p^T_{ij}\}_{\tilde{N}\times\tilde{N}}$ be the $T$-step transition probability matrix, i.e., probability distribution at step $T$, where $p^T_{ij}$ is the probability that a random walker starting from node $y_i$ arrives at node $y_j$ at step $T$. The probability distribution at step $1$ is obviously the transition probability matrix, i.e., $P^1=P$. The probability distribution at step $T$ is computed as $P^T=P\cdot P^{T-1}$, for $T\geq 2$.

Let $p^T_{i:}=\{p^T_{i1},\cdots,p^T_{i\tilde{N}}\}$ denote the probability distribution of node $y_i$ at step $T$, which is the $i$-th row of $P^T$ and represents the probability of going from node $y_i$ to each node in the graph by the random walk process at step $T$. The relationship between node $y_i$ and node $y_j$ can be studied by comparing their probability distributions at a certain step \cite{pons_rw_05}. However, the probability distributions at different steps reflect different scales of information for the graph structure. Using a single step of probability distribution as the feature of a node overlooks the properties of this node at different scales. In order to take advantage of multi-scale information in the graph, we propose to exploit the probability trajectory for describing the random walk process starting from each node, which considers the probability distributions from step $1$ to step $T$ rather than a single step. The formal definition of the probability trajectory is given in Definition~\ref{def:PT}.

\begin{mydef}
\label{def:PT}
The probability trajectory of a random walker starting from node $y_i$ with length $T$ is defined as a $T\tilde{N}$-tuple:
\begin{equation}
PT^T_i=\{p^1_{i:},p^2_{i:},\cdots,p^T_{i:}\},
\end{equation}
where $p^T_{i:}$ is the probability distribution of node $y_i$ at step $T$.
\end{mydef}

The probability trajectory of a random walker starting from a given node is a $T\tilde{N}$-tuple and can be viewed as a feature vector for the node. We further define a pair-wise similarity measure based on the probability trajectory representation.

\begin{mydef}
\label{def:PTS}
The probability trajectory based similarity (PTS) between node $y_i$ and node $y_j$ is defined as
\begin{equation}
\label{eq:PTS}
PTS_{ij}=Sim(PT^T_i,PT^T_j),
\end{equation}
where $Sim(u,v)$ is a similarity measure between two vectors $u$ and $v$.
\end{mydef}

In fact, any similarity measure can be used in Eq.~(\ref{eq:PTS}). In our work, we use the cosine similarity as the similarity measure, which effectively captures the relationship between the random walk trajectories. The cosine similarity between two vectors $u$ and $v$ is computed as follows:
\begin{equation}
\label{eq:cosine}
Sim_{cos}(u,v)=\frac{<u,v>}{\sqrt{<u,u>\cdot <v,v>}},
\end{equation}
where $<u,v>$ is the inner product of $u$ and $v$.

Therefore we obtain the new similarity measure PTS using the probability trajectory of the random walker starting from each node. Specifically, the random walks are performed on the sparse graph $K$-ENG which preserves only a small number of probably reliable links by the ENS strategy. The probability trajectories on the $K$-ENG are used as the feature vectors for the graph nodes, which incorporates multi-scale graph information into a $T\tilde{N}$-tuple by the different steps of the random walks. Theoretically, it is possible that the $K$-ENG may consist of more than one connected component, in which case we can perform the random walk on each connected component of the graph separately and then map the random walk trajectories at each component back to the whole graph to facilitate the computation.

For clarity, the algorithm of computing PTS is given in Algorithm 1.

\begin{figure}[!htb]
\textbf{Algorithm 1 (Computation of Probability Trajectory Based Similarity)}\\
\small{ {\bfseries Input:} $\Pi$, $k$.
\begin{algorithmic}[1]
    \STATE Obtain the set of microclusters $\mathcal{Y}$ from $\Pi$.\\
    \STATE Compute the MCA matrix by Eq.~(\ref{eq:MCA}).\\
    \STATE Build the graph MSG using the MCA matrix.\\
    \STATE Construct the sparse graph $K$-ENG by the ENS strategy.\\
    \STATE Perform random walks on $K$-ENG with the transition probability matrix given in Eq.~(\ref{eq:transit_matrix}).\\
    \STATE Compute the new similarity PTS by Eq.~(\ref{eq:PTS}).\\
\end{algorithmic}
{\bfseries Output:} $\mathcal{Y}$, $\{PTS_{ij}\}_{\tilde{N}\times\tilde{N}}$.}
\end{figure}

\subsection{Consensus Functions}

Having generated the new similarity measure PTS, the next step is to obtain the consensus clustering. Here, any clustering algorithm based on pair-wise similarity can be applied to the PTS measure to obtain the final clustering. Typically, we propose two different types of consensus functions based on PTS, termed probability trajectory accumulation (PTA) and probability trajectory based graph partitioning (PTGP), respectively.

\subsubsection{Probability Trajectory Accumulation (PTA)}
\label{sec:PTA}
In this section, we introduce the consensus function termed PTA, which is based on hierarchical agglomerative clustering.

To perform hierarchical clustering, the set of microclusters are treated as the initial regions and the PTS is used as the similarity measure to guide the region merging process. Let $\mathcal{R}^{(0)}=\{R^{(0)}_1,\cdots,R^{(0)}_{|\mathcal{R}^{(0)}|}\}$ denote the set of initial regions, where $R^{(0)}_j = y_j$ and $|\mathcal{R}^{(0)}|=\tilde{N}$. Let $S^{(0)}=\{s^{(0)}_{ij}\}_{|\mathcal{R}^{(0)}|\times |\mathcal{R}^{(0)}|}$ be the initial similarity matrix, where $s^{(0)}_{ij}=PTS_{ij}$.

In each step, the two regions with the highest similarity are merged into a new and bigger region and thus the number of regions decrements by one. Then the similarity matrix for the new set of regions will be computed w.r.t. average-link (AL), complete-link (CL), or single-link (SL). Let $\mathcal{R}^{(t)}=\{R^{(t)}_1,\cdots,R^{(t)}_{|\mathcal{R}^{(t)}|}\}$ be the set of generated regions in the $t$-step, for $t=1,2,\cdots,\tilde{N}-1$, where $|\mathcal{R}^{(t)}|$ is the number of regions in $\mathcal{R}^{(t)}$. Note that each region contains one or more microclusters. We write it as $y_i\in R^{(t)}_j$ if microcluster $y_i$ is in region $R^{(t)}_j$. Let $S^{(t)}=\{s^{(t)}_{ij}\}_{|\mathcal{R}^{(t)}|\times |\mathcal{R}^{(t)}|}$ be the similarity matrix for $\mathcal{R}^{(t)}$, which can be computed w.r.t. AL, CL, or SL. That is

\begin{equation}
s^{(t)}_{ij}=\begin{cases}
\frac{1}{|R^{(t)}_i|\cdot |R^{(t)}_j|}\sum_{y_k\in R^{(t)}_i,y_l\in R^{(t)}_j}PTS_{kl}, &\text{If Method=AL,}\\
\sum_{y_k\in R^{(t)}_i,y_l\in R^{(t)}_j}PTS_{kl}, &\text{If Method=CL,}\\
\max_{y_k\in R^{(t)}_i,y_l\in R^{(t)}_j}PTS_{kl}, &\text{If Method=SL,}\\
\end{cases}
\end{equation}
where $|R^{(t)}_i|$ is the number of microclusters in $R^{(t)}_i$.

The region merging process is performed iteratively and the number of regions decrements by one in each step. Obviously, after the $(\tilde{N}-1)$-step, there will be one region left, which contains the entire set of the microclusters. Then we have a dendrogram, i.e., a hierarchical representation of clusterings. Each level in the dendrogram represents a clustering with a certain number of clusters (or regions). The final clustering is obtained by specifying a level for the dendrogram.

An advantage of agglomerative clustering is that it can efficiently generate a hierarchy of clusterings where each level represents a clustering with a certain number of clusters. However, the region merging process is inherently local and greedy. A mistaken merging may lead to increasing errors in the following merging steps. The similarity measure determines the region merging order and plays a crucial role in agglomerative clustering. In our work, the PTS measure is able to deal with the uncertain links and incorporate the global structure information in the ensemble, which is beneficial for improving the accuracy and robustness of the agglomerative clustering. The experimental results also show the advantage of the PTA method (base on PTS) compared to other pair-wise similarity based methods \cite{Fred05_EAC,iam_on11_linkbased,yi_icdm12,iamon08_icds,huang14_weac} (see Section~\ref{sec:comp_ensemb} and Table~\ref{table:compare_ce}).

For clarity, the PTA algorithm is summarized in Algorithm 2.

\begin{figure}[!htb]
\textbf{Algorithm 2 (Probability Trajectory Accumulation)}\\
\small{ {\bfseries Input:} $\Pi$, $k$.
\begin{algorithmic}[1]
    \STATE Compute the microclusters and the PTS measure according to Algorithm 1.\\
    \STATE Initialize the set of regions $\mathcal{R}^{(0)}$.
    \STATE Construct the dendrogram iteratively:\\
    \textbf{for} {$t=1,2,\cdots,\tilde{N}-1$}\\
     ~~~~Merge the most similar two regions in $\mathcal{R}^{(t-1)}$ w.r.t. $S^{(t-1)}$.\\
     ~~~~Obtain the new set of regions $\mathcal{R}^{(t)}$.\\
     ~~~~Compute the new similarity matrix $S^{(t)}$.\\
    \textbf{end for}\\
    \STATE Find the clustering with $k$ clusters in the dendrogram.\\
    \STATE Obtain the final clustering by mapping microclusters back to objects.\\
\end{algorithmic}
{\bfseries Output:} the consensus clustering $\pi^*$.}
\end{figure}

\subsubsection{Probability Trajectory Based Graph Partitioning (PTGP)}
\label{sec:PTGP}
In this section, we introduce the consensus function termed PTGP, which is based on bipartite graph formulation.

A bipartite graph is constructed by treating both clusters and microclusters as nodes. As illustrated in Fig.~\ref{fig:MCBG}, there are no links between two clusters or between two microclusters. A link between two nodes exists if and only if one of the nodes is a cluster and the other is a microcluster. The weight of the link between a microcluster and a cluster is decided by the similarity between them. Here, we define the similarity between a microcluster $y_i$ and a cluster $C_j$ as the average PTS measure between $y_i$ and the microclusters in $C_j$. The formal definition is given as follows.

\begin{figure}[!t]
\begin{center}
{
{\includegraphics[width=0.8\linewidth]{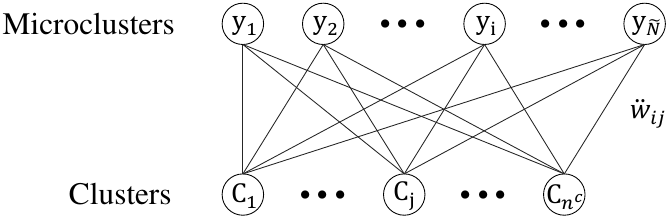}}}
\caption{The microcluster-cluster bipartite graph (MCBG)}
\label{fig:MCBG}
\end{center}
\end{figure}

\begin{mydef}
\label{def:mc_similarity}
The similarity between a microcluster $y_i$ and a cluster $C_j$ is defined as
\begin{align}
\label{eq:mc_similarity}
Sim_{mc}(y_i, C_j) &= Sim_{mc}(C_j, y_i)\nonumber\\
&=\frac{1}{|C_j|}\sum_{y_k\in C_j}PTS_{ik},
\end{align}
where $|C_j|$ is the number of microclusters in $C_j$.
\end{mydef}

\begin{mydef}
\label{def:MCBG}
The microcluster-cluster bipartite graph (MCBG) is defined as
\begin{equation}
\ddot{G}=(\ddot{V},\ddot{L}),
\end{equation}
where $\ddot{V}=\mathcal{Y}\bigcup \mathcal{C}=\{\ddot{v}_1,\dots,\ddot{v}_{\ddot{N}}\}$ is the node set, $\ddot{N}=\tilde{N}+N_c$ is the number of nodes and $\ddot{L}$ is the link set. The weight of the link between nodes $\ddot{v}_i$ and $\ddot{v}_j$ is defined as
\begin{equation}
\label{eq:mcbg_weight}
\ddot{w}_{ij}=\begin{cases}
Sim_{mc}(\ddot{v}_i,\ddot{v}_j), &\text{if } \ddot{v}_i\in \mathcal{Y}, \ddot{v}_j\in\mathcal{C}\\
&\text{or }\ddot{v}_i\in\mathcal{C}, \ddot{v}_j\in \mathcal{Y}, \\
0,&\text{otherwise}.
\end{cases}
\end{equation}
\end{mydef}

By treating both clusters and microclusters as nodes, the microcluster-cluster bipartite graph (MCBG) is constructed according to Definition~\ref{def:MCBG}. With regard to the bipartite structure of the MCBG, the efficient graph partitioning method Tcut \cite{CVPR12_Li} can be used to partition the graph into a certain number of disjoint sets of nodes. The microcluster nodes in the same segment are treated as a cluster and thus the final consensus clustering can be obtained by mapping the microclusters back to the data objects.

For clarity, we summarize the PTGP algorithm in Algorithm 3.

\begin{figure}[!htb]
\textbf{Algorithm 3 (Probability Trajectory Based Graph Partitioning)}\\
\small{ {\bfseries Input:} $\Pi$, $k$.
\begin{algorithmic}[1]
    \STATE Compute the microclusters and the PTS measure according to Algorithm 1.\\
    \STATE Compute the microcluster-cluster similarity by Eq.~(\ref{eq:mc_similarity}).\\
    \STATE Build the bipartite graph MCBG.
    \STATE Partition the MCBG into $k$ clusters using Tcut.
    \STATE Obtain the final clustering by mapping microclusters back to objects.
\end{algorithmic}
{\bfseries Output:} the consensus clustering $\pi^*$.}
\end{figure}

\section{Experiments}
\label{sec:experiment}

In this section, we conduct experiments using ten real-world datasets. All of the experiments are conducted in Matlab R2014a 64-bit on a workstation (Windows Server 2008 R2 64-bit, 8 Intel 2.40 GHz processors, 96 GB of RAM).

\subsection{Datasets and Evaluation Method}
\label{sec:dataset_and_eval}

In our experiments, we use ten real-world datasets, namely, \emph{Multiple Features} (\emph{MF}), \emph{Image Segmentation} (\emph{IS}), \emph{MNIST}, \emph{Optical Digit Recognition} (\emph{ODR}), \emph{Landsat Satellite} (\emph{LS}), \emph{Pen Digits} (\emph{PD}), \emph{USPS}, \emph{Forest Covertype} (\emph{FC}), \emph{KDD99-10P} and \emph{KDD99}. The \emph{MNIST} dataset and the \emph{USPS} dataset are from \cite{lecun98} and \cite{Dueck_AP_PHDThesis:09}, respectively. The \emph{KDD99} dataset is from the UCI KDD Archive \cite{uci_kdd_archive99}, whereas \emph{KDD99-10P} is a $10\%$ subset of \emph{KDD99}. The other six datasets are from the UCI machine learning repository \cite{Bache+Lichman:2013}. The details of the benchmark datasets are given in Table~\ref{table:datasets}.

We use the normalized mutual information (NMI) \cite{strehl02} to evaluate the quality of the consensus clusterings,  which provides a sound indication of the shared information between two clusterings. Note that a higher NMI indicates a better test clustering.

\begin{table}[!t]%
\centering
\caption{Description of the benchmark datasets}\vskip -0.1 in
\label{table:datasets}
\begin{center}
\begin{tabular}{p{1.6cm}<{\centering}|p{1.5cm}<{\centering}p{1.3cm}<{\centering}p{1.3cm}<{\centering}}
\toprule
Dataset         &\#Object     &\#Attribute      &\#Class\\
\midrule
\emph{MF}                &2,000  &649      &10\\
\emph{IS}                &2,310  &19     &7\\
\emph{MNIST}                &5,000  &784      &10\\
\emph{ODR}                &5,620  &64      &10\\
\emph{LS}                &6,435  &36      &6\\
\emph{PD}           &10,992 &16     &10\\
\emph{USPS}                 &11,000    &256     &10\\
\emph{FC}                &11,340  &54      &7\\
\emph{KDD99-10P}    &49,402     &41     &23\\
\emph{KDD99}       &494,020     &41     &23\\
\bottomrule
\end{tabular}
\end{center}
\end{table}

\subsection{Construction of Ensembles}
\label{sec:construct_base}

To evaluate the effectiveness of the proposed approach over various combinations of base clusterings, we construct a pool of a large number of base clusterings. In our experiments, we use the $k$-means algorithm and the rival penalized competitive learning (RPCL) \cite{xu93_rpcl} algorithm to construct the base clustering pool. The $k$-means and RPCL algorithms are performed repeatedly with random initializations and parameters. The numbers of initial clusters for $k$-means and RPCL are randomly chosen in the interval of $[2,ub]$, where $ub=\min\{\sqrt{N}/2,50\}$ is the upper bound of the number of clusters and $N$ is the number of objects in the dataset. By running $k$-means and RPCL $100$ times respectively, a pool of $200$ base clusterings is obtained for each benchmark dataset.

For each run of the proposed methods and the baseline ensemble clustering methods, we generate the ensemble by randomly drawing $M$ base clusterings from the base clustering pool. Unless specially mentioned, the ensemble size $M=10$ is used in this paper. To rule out the factor of \emph{getting lucky occasionally}, the average performances of the proposed methods and the baseline methods are evaluated and compared over a large number of runs.

\subsection{Parameter Analysis}
\label{sec:sens_paras}

\begin{table*}[!thb]\footnotesize
\centering 
\caption{Average information of the MSG graph over 20 runs.}\vskip -0.05 in
\label{table:avg_MC} 
\begin{tabular}{m{2.3cm}<{\centering}|m{1cm}<{\centering}m{1cm}<{\centering}m{1.1cm}<{\centering}m{1.1cm}<{\centering}m{1.1cm}<{\centering}m{1.1cm}<{\centering}m{1.2cm}<{\centering}m{1.1cm}<{\centering}m{1.5cm}<{\centering}m{0.9cm}<{\centering}}
\toprule
$Dataset$  &\emph{MF}  &\emph{IS}  &\emph{MNIST}  &\emph{ODR}  &\emph{LS} &\emph{PD} &\emph{USPS} &\emph{FC}    &\emph{KDD99-10P}   &\emph{KDD99}\\
\midrule
\#Node (i.e.,$\tilde{N}$)   &$242$	&$297$	&$1,438$	&$899$	 &$1,064$	 &$1,095$	 &$2,975$	 &$1,837$    &$230$  &$301$\\
\midrule
\#Link    &$12,978$	&$19,778$	&$468,453$	&$153,949$	 &$219,753$	&$208,146$	 &$1,825,938$	 &$583,369$    &$19,260$   &$34,509$\\
\bottomrule
\end{tabular}
\end{table*}

\begin{table*}[!thb]\footnotesize
\centering 
\caption{Average RatioPL over 20 runs with varying parameter $K$.}\vskip -0.05 in
\label{table:RatioPL} 
\begin{tabular}{m{1.2cm}<{\centering}|m{1.5cm}<{\centering}|m{0.9cm}<{\centering}m{0.9cm}<{\centering}m{0.9cm}<{\centering}m{0.9cm}<{\centering}m{0.9cm}<{\centering}m{0.9cm}<{\centering}m{0.9cm}<{\centering}m{0.9cm}<{\centering}}
\toprule
\multicolumn{2}{c|}{$K$}  &1  &2  &4  &8  &16 &32 &64 &ALL\\
\midrule
\multirow{8}{*}{RatioPL}
&\emph{MF}	&$2.4\%$	&$3.4\%$	&$6.6\%$	&$11.4\%$	 &$21.1\%$	 &$43.5\%$	 &$82.1\%$	 &$100.0\%$\\
&\emph{IS}	&$2.0\%$	&$3.2\%$	&$5.9\%$	&$10.5\%$	 &$19.4\%$	 &$37.9\%$	 &$70.1\%$	 &$100.0\%$\\
&\emph{MNIST}	&$0.6\%$	&$0.9\%$	&$1.7\%$	&$2.9\%$	 &$5.0\%$	 &$8.7\%$	 &$16.2\%$	 &$100.0\%$\\
&\emph{ODR}	&$1.0\%$	&$1.6\%$	&$3.0\%$	&$4.9\%$	 &$8.5\%$	 &$15.0\%$	 &$31.0\%$	 &$100.0\%$\\
&\emph{LS}	&$0.8\%$	&$1.1\%$	&$2.3\%$	&$3.8\%$	 &$6.9\%$	 &$12.4\%$	 &$23.9\%$	 &$100.0\%$\\
&\emph{PD}	&$0.9\%$	&$1.2\%$	&$2.4\%$	&$4.0\%$	 &$7.1\%$	 &$13.5\%$	 &$28.1\%$	 &$100.0\%$\\
&\emph{USPS}	&$0.3\%$	&$0.5\%$	&$1.0\%$	&$1.7\%$	 &$2.9\%$	 &$5.0\%$	 &$8.7\%$	 &$100.0\%$\\
&\emph{FC}	&$0.5\%$	&$0.7\%$	&$1.5\%$	&$2.4\%$	 &$4.4\%$	&$8.0\%$	 &$15.2\%$	 &$100.0\%$\\
&\emph{KDD99-10P}   &$1.7\%$	&$2.1\%$	&$4.2\%$	&$9.2\%$	&$23.9\%$	 &$53.4\%$	&$90.0\%$	 &$100.0\%$\\
&\emph{KDD99}   &$1.6\%$	&$1.9\%$	&$3.6\%$	&$7.2\%$	&$14.3\%$	 &$34.0\%$	&$68.9\%$	 &$100.0\%$\\
\bottomrule
\end{tabular}
\end{table*}

\begin{table*}[!thb]\footnotesize
\centering 
\caption{Average performance (in terms of NMI) of PTA over 20 runs with varying parameters $K$ and $T$.}\vskip -0.05 in
\label{table:comp_para_PTA_KT} 
\begin{tabular}{m{1.5cm}<{\centering}|m{0.50cm}<{\centering}m{0.50cm}<{\centering}m{0.50cm}<{\centering}m{0.50cm}<{\centering}m{0.50cm}<{\centering}m{0.50cm}<{\centering}m{0.50cm}<{\centering}m{0.50cm}<{\centering}|m{0.50cm}<{\centering}m{0.50cm}<{\centering}m{0.50cm}<{\centering}m{0.50cm}<{\centering}m{0.50cm}<{\centering}m{0.50cm}<{\centering}m{0.50cm}<{\centering}m{0.50cm}<{\centering}}
\toprule
$K$  &1  &2  &4  &8  &16 &32 &64 &ALL   &\multicolumn{8}{c}{10}\\
\midrule
$T$ &\multicolumn{8}{c|}{10}    &1  &2  &4  &8  &16 &32 &64 &128\\
\midrule
\emph{MF}	&0.585	&0.597	&0.630	&0.627	&0.605	&0.560	 &0.538	&0.535	 &0.613	 &0.617	&0.619	 &0.620	 &0.620	 &0.620	 &0.619	 &0.615\\
\emph{IS}	&0.574	&0.612	&0.632	&0.615	&0.595	&0.592	 &0.610	&0.609	 &0.613	 &0.616	&0.611	 &0.612	 &0.611	 &0.612	 &0.611	 &0.612\\
\emph{MNIST}	&0.556	&0.575	&0.585	&0.584	&0.589	&0.574	 &0.538	&0.486	 &0.567	&0.585	&0.582	 &0.591	 &0.593	 &0.591	 &0.592	 &0.592\\
\emph{ODR}	&0.770	&0.792	&0.819	&0.820	&0.813	&0.798	 &0.757	&0.713	 &0.790	 &0.816	&0.813	 &0.817	 &0.816	 &0.817	 &0.812	 &0.810\\
\emph{LS}	&0.599	&0.595	&0.615	&0.618	&0.621	&0.612	 &0.586	&0.539	 &0.600	 &0.613	&0.618	 &0.620	 &0.620	 &0.627	 &0.632	 &0.637\\
\emph{PD}	&0.714	&0.715	&0.757	&0.765	&0.761	&0.735	 &0.698	&0.678	 &0.722	 &0.759	&0.768	 &0.765	 &0.762	 &0.762	 &0.761	 &0.760\\
\emph{USPS}	&0.560	&0.565	&0.585	&0.590	&0.591	&0.579	 &0.570	&0.442	 &0.574	 &0.587	&0.588	 &0.592	 &0.595	 &0.594	 &0.594	 &0.587\\
\emph{FC}	&0.235	&0.237	&0.247	&0.254	&0.250	&0.237	 &0.221	&0.199	 &0.246	 &0.245	&0.257	 &0.255	 &0.254	 &0.250	 &0.250	 &0.250\\
\emph{KDD99-10P}   &0.565	&0.599	&0.624	&0.659	&0.664	&0.592	&0.562	 &0.561	&0.677	&0.681	 &0.686	&0.677	 &0.671	 &0.668	 &0.676	 &0.670\\
\emph{KDD99}   &0.538	&0.560	&0.613	&0.682	&0.700	&0.644	&0.570	&0.562	 &0.687	&0.693	&0.698	 &0.700	&0.691	 &0.689	 &0.694	& 0.693\\
\bottomrule
\end{tabular}
\end{table*}

\begin{table*}[!thb]\footnotesize
\centering 
\caption{Average performance (in terms of NMI) of PTGP over 20 runs with varying parameters $K$ and $T$.}\vskip -0.05 in
\label{table:comp_para_PTGP_KT} 
\begin{tabular}{m{1.5cm}<{\centering}|m{0.50cm}<{\centering}m{0.50cm}<{\centering}m{0.50cm}<{\centering}m{0.50cm}<{\centering}m{0.50cm}<{\centering}m{0.50cm}<{\centering}m{0.50cm}<{\centering}m{0.50cm}<{\centering}|m{0.50cm}<{\centering}m{0.50cm}<{\centering}m{0.50cm}<{\centering}m{0.50cm}<{\centering}m{0.50cm}<{\centering}m{0.50cm}<{\centering}m{0.50cm}<{\centering}m{0.50cm}<{\centering}}
\toprule
$K$  &1  &2  &4  &8  &16 &32 &64 &ALL   &\multicolumn{8}{c}{10}\\
\midrule
$T$ &\multicolumn{8}{c|}{10}    &1  &2  &4  &8  &16 &32 &64 &128\\
\midrule
\emph{MF}	&0.609	&0.610	&0.627	&0.621	&0.596	&0.536	 &0.499	&0.495	 &0.619	 &0.617	&0.619	 &0.616	 &0.610	 &0.607	 &0.599	 &0.597\\
\emph{IS}	&0.610	&0.615	&0.623	&0.616	&0.589	&0.582	 &0.603	&0.594	 &0.615	 &0.618	&0.614	 &0.611	 &0.613	 &0.611	 &0.608	 &0.616\\
\emph{MNIST}	&0.575	&0.577	&0.585	&0.587	&0.587	&0.576	 &0.533	&0.471	 &0.581	&0.583	&0.585	 &0.586	 &0.588	 &0.589	 &0.589	 &0.587\\
\emph{ODR}	&0.809	&0.815	&0.822	&0.830	&0.821	&0.799	 &0.745	&0.679	 &0.812	 &0.817	&0.824	 &0.824	 &0.824	 &0.821	 &0.818	 &0.814\\
\emph{LS}	&0.604	&0.607	&0.615	&0.623	&0.623	&0.621	 &0.586	&0.505	 &0.615	 &0.616	&0.616	 &0.615	 &0.626	 &0.629	 &0.631	 &0.631\\
\emph{PD}	&0.746	&0.747	&0.759	&0.762	&0.756	&0.727	 &0.696	&0.649	 &0.749	 &0.751	&0.759	 &0.765	 &0.766	 &0.761	 &0.760	 &0.758\\
\emph{USPS}	&0.575	&0.570	&0.578	&0.586	&0.582	&0.572	 &0.556	&0.404	 &0.566	 &0.570	&0.576	 &0.580	 &0.580	 &0.588	 &0.584	 &0.574\\
\emph{FC}	&0.224	&0.227	&0.235	&0.238	&0.237	&0.234	 &0.218	&0.195	 &0.229	 &0.231	&0.232	 &0.236	 &0.237	 &0.237	 &0.236	 &0.234\\
\emph{KDD99-10P}   &0.618	&0.620	&0.627	&0.643	&0.646	&0.591	&0.524	 &0.521	&0.673	&0.675	 &0.665	&0.659	& 0.655	& 0.656	 &0.664	 &0.662\\
\emph{KDD99}   &0.620	&0.622	&0.624	&0.686	&0.701	&0.653	&0.571	&0.567	 &0.709	&0.713	&0.720	 &0.722	&0.721	 & 0.698	 & 0.696	 & 0.704\\
\bottomrule
\end{tabular}
\end{table*}

In this paper, we propose two ensemble clustering methods, termed PTA and PTGP, respectively. There are two parameters, namely, $K$ and $T$, in the proposed methods. The parameter $K$ specifies how many neighbors of a node will be treated as elite neighbors and preserved. The smaller the parameter $K$ is, the sparser the $K$-ENG will be. The parameter $T$ is the length of the probability trajectories.

We first test the influence of parameter $K$ on the proportion of preserved links in the $K$-ENG. The average information of the MSG over 20 runs on each dataset is shown in Table~\ref{table:avg_MC}. Because the microclusters are generated by intersecting multiple base clusterings, the number of microclusters is affected by the total number of objects as well as the shapes of cluster boundaries. For the \emph{KDD99-10P} and \emph{KDD99} datasets, the number of microclusters is less than $1\%$ of the number of the original objects. For the other datasets, the number of microclusters is averagely about $10\%$ to $20\%$ of the number of the original objects, i.e., using microclusters as nodes reduces the graph size by about $80\%$ to $90\%$.  The number of links in the MSG is also given in Table~\ref{table:avg_MC}. Note that a link between two nodes exists if and only if the weight between them is non-zero, i.e., a ``link'' with zero-weight does not count as a link here. By cutting out the probably unreliable links in the MSG via the ENS strategy, the sparse graph $K$-ENG is constructed. The ratio of preserved links (RatioPL) is defined as
\begin{equation}
\label{eq:ratioPL}
\mathrm{RatioPL} = \frac{\#\text{Links in $K$-ENG}}{\#\text{Links in MSG}}.
\end{equation}

For each parameter setting of $K$ and $T$, we run the proposed methods $20$ times with the ensemble of base clusterings randomly drawn from the base clustering pool (see Section~\ref{sec:construct_base}) at each time. The RatioPL with respect to different values of $K$ is shown in Table~\ref{table:RatioPL}. When $K=\text{ALL}$, all links in MSG are preserved. The average NMI scores of PTA and PTGP with varying $K$ and $T$ are reported in Tables~\ref{table:comp_para_PTA_KT} and \ref{table:comp_para_PTGP_KT}, respectively. The performances of PTA and PTGP are consistently good when $K$ is set in the interval of $[5,20]$  on the benchmark datasets and significantly better than setting $K=\text{ALL}$. As can be seen in Tables~\ref{table:RatioPL}, \ref{table:comp_para_PTA_KT} and \ref{table:comp_para_PTGP_KT}, preserving a small proportion of the links via the ENS strategy can lead to significantly better performance than using all graph links by setting $K=ALL$.

As shown in Tables~\ref{table:comp_para_PTA_KT} and \ref{table:comp_para_PTGP_KT}, the performances of the proposed PTA and PTGP methods are robust over various parameter settings. Setting $K$ and $T$ to moderate values, e.g., both in the interval of $[5,20]$,  leads to consistently good performances on the benchmark datasets. Empirically, it is suggested that the parameters $K$ and $T$ be set in the interval of $[\sqrt{\tilde{N}}/5,\sqrt{\tilde{N}}]$, where $\tilde{N}$ is the number of the graph nodes in the $K$-ENG. In the following of this paper, we set both $K$ and $T$ to the floor of $\sqrt{\tilde{N}}/2$ in all experiments on the benchmark datasets.

\subsection{Comparison against Base Clusterings}
\label{sec:comp_base}

The purpose of ensemble clustering is to combine multiple base clusterings into a probably better and more robust clustering. In this section, we compare the proposed PTA (associated with average-link) and PTGP methods against the base clusterings. Figure~\ref{fig:base_comp} illustrates the average NMI scores and the variances of the proposed methods and the base clusterings over 100 runs. As shown in Fig.~\ref{fig:base_comp}, for the benchmark datasets, the proposed PTA and PTGP algorithms produce overall more accurate clusterings than the base clusterings. Especially, for the \emph{ODR}, \emph{LS}, \emph{PD}, \emph{KDD99-10P}, and \emph{KDD99} datasets, the proposed methods achieve significant improvements in terms of NMI compared to the base clusterings.

\begin{figure}[!t]
\begin{center}
{
{\includegraphics[width=0.95\linewidth]{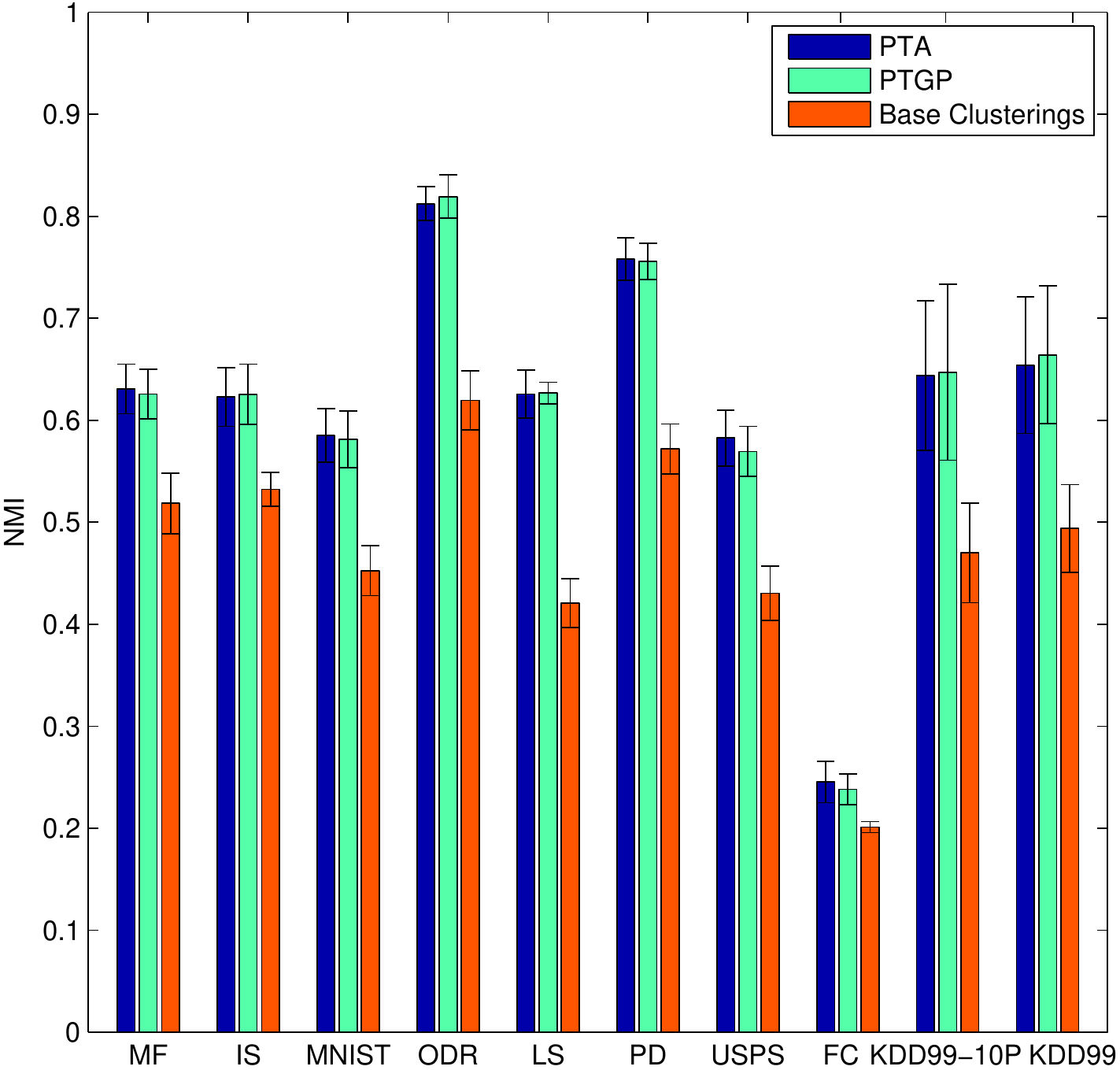}}} 
\caption{Average performances in terms of NMI of our methods and the base clusterings over $100$ runs.}
\label{fig:base_comp}
\end{center}
\end{figure}

\subsection{Comparison against Other Ensemble Clustering Approaches}
\label{sec:comp_ensemb}

In this section, we evaluate the effectiveness and robustness of the proposed PTA and PTGP methods by comparing them against ten ensemble clustering methods, namely, cluster-based similarity partitioning algorithm (CSPA) \cite{strehl02}, hypergraph partitioning algorithm (HGPA) \cite{strehl02}, meta-
clustering algorithm (MCLA) \cite{strehl02}, seeded random walk (SRW) \cite{abdala10_icpr}, graph partitioning with multi-granularity link analysis (GP-MGLA) \cite{huang14_weac}, evidence accumulation clustering (EAC) \cite{Fred05_EAC}, SimRank similarity based method (SRS) \cite{iamon08_icds}, weighted connected-triple method (WCT) \cite{iam_on11_linkbased}, ensemble clustering by matrix completion (ECMC) \cite{yi_icdm12}, and weighted evidence accumulation clustering (WEAC) \cite{huang14_weac}.

Of the ten baseline ensemble clustering methods, CSPA, HGPA, MCLA, SRW, and GP-MGLA are graph partitioning based methods, while EAC, SRS, WCT, ECMC, and WEAC are pair-wise similarity based methods. The pair-wise similarity based methods construct a similarity matrix based on the ensemble information. For each of the pair-wise similarity based methods and the proposed PTA method, we use three agglomerative clustering methods to obtain the final clusterings, namely, average-link (AL), complete-link (CL), and single-link (SL). Thus, each pair-wise similarity based method is associated with three sub-methods. For the other baseline methods, we use the parameter settings as suggested by the authors in the their papers \cite{Fred05_EAC,iam_on11_linkbased,strehl02,iamon08_icds,abdala10_icpr,huang14_weac}.

\begin{table*}[!t]\footnotesize
\centering 
\caption{Average performances (in terms of NMI) over 100 runs by different ensemble clustering methods (The three highest scores in each column are highlighted in bold)}\vskip -0.1 in
\label{table:compare_ce}
\begin{tabular}{|m{1.8cm}<{\centering}|m{0.96cm}<{\centering}m{0.96cm}<{\centering}|m{0.96cm}<{\centering}m{0.96cm}<{\centering}|m{0.96cm}<{\centering}m{0.96cm}<{\centering}|m{0.96cm}<{\centering}m{0.96cm}<{\centering}|m{0.96cm}<{\centering}m{0.96cm}<{\centering}|}
\hline
\multirow{2}{*}{Method}               &\multicolumn{2}{c|}{\emph{MF}}    &\multicolumn{2}{c|}{\emph{IS}}   &\multicolumn{2}{c|}{\emph{MNIST}} &\multicolumn{2}{c|}{\emph{ODR}} &\multicolumn{2}{c|}{\emph{LS}}\\
\cline{2-11}
&Best-$k$&True-$k$    &Best-$k$&True-$k$    &Best-$k$&True-$k$  &Best-$k$&True-$k$  &Best-$k$&True-$k$\\
\hline
\hline
PTA-AL  &\textbf{0.631}  &\textbf{0.614}  &0.623  &0.607  &\textbf{0.585}  &\textbf{0.578}  &\textbf{0.813}  &\textbf{0.804}  &\textbf{0.626}  &\textbf{0.622}\\
PTA-CL  &\textbf{0.629}  &\textbf{0.611}  &0.620  &\textbf{0.609}  &\textbf{0.582}  &\textbf{0.577}  &0.805  &0.793  &0.595  &0.584\\
PTA-SL  &0.612  &0.522  &0.616  &0.521  &0.523  &0.103  &0.760  &0.533  &0.552  &0.114\\
\hline
PTGP    &\textbf{0.626}  &\textbf{0.613}  &\textbf{0.625}  &\textbf{0.611}  &0.581  &\textbf{0.576}  &\textbf{0.819}  &\textbf{0.813}    &\textbf{0.627}  &\textbf{0.625}\\
\hline
\hline
CSPA    &0.597  &0.591  &0.605  &0.605  &0.493  &0.486  &0.726  &0.723    &0.511  &0.475\\
HGPA    &0.485  &0.231  &0.500  &0.457  &0.423  &0.120  &0.643  &0.353    &0.406  &0.324\\
MCLA    &0.617  &0.596  &0.623  &\textbf{0.609}  &0.536  &0.518  &0.785  &0.770    &0.550  &0.518\\
\hline
SRW     &0.468  &0.197  &0.509  &0.175  &0.393  &0.126  &0.514  &0.135     &0.434  &0.124\\
\hline
GP-MGLA &0.618  &0.604  &0.613  &0.608  &0.569  &0.557  &\textbf{0.807}  &\textbf{0.798} &0.615  &\textbf{0.607}\\
\hline
\hline
EAC-AL  &0.603  &0.578  &0.612  &0.605  &0.570  &0.555  &0.792  &0.772  &0.596  &0.569\\
EAC-CL  &0.572  &0.508  &0.622  &0.442  &0.460  &0.203  &0.651  &0.389  &0.459  &0.282\\
EAC-SL  &0.531  &0.173  &0.542  &0.413  &0.021  &0.002  &0.257  &0.099  &0.079  &0.002\\
\hline
SRS-AL  &0.613  &0.584  &0.614  &0.603  &0.575  &0.557  &0.794  &0.772  &0.603  &0.583\\
SRS-CL  &0.587  &0.547  &\textbf{0.625}  &0.585  &0.554  &0.534  &0.771  &0.744  &0.536  &0.453\\
SRS-SL  &0.484  &0.189  &0.579  &0.358  &0.017  &0.002  &0.130  &0.003  &0.065  &0.001\\
\hline
WCT-AL  &0.605  &0.579  &0.617  &0.606  &\textbf{0.585}  &0.562  &0.800  &0.774  &\textbf{0.617}  &0.603\\
WCT-CL  &0.579  &0.546  &\textbf{0.634}  &\textbf{0.612}  &0.561  &0.529  &0.774  &0.741  &0.540  &0.459\\
WCT-SL  &0.596  &0.247  &0.599  &0.415  &0.028  &0.002  &0.318  &0.132  &0.213  &0.002\\
\hline
ECMC-AL  &0.588  &0.333  &0.593  &0.277  &0.554  &0.132  &0.780  &0.328  &0.584  &0.018\\
ECMC-CL  &0.591  &0.520  &0.560  &0.411  &0.446  &0.276  &0.617  &0.454  &0.402  &0.204\\
ECMC-SL  &0.416  &0.166  &0.560  &0.200  &0.041  &0.015  &0.350  &0.177  &0.071  &0.001\\
\hline
WEAC-AL  &0.606  &0.583  &0.610  &0.605  &0.577  &0.569  &0.799  &0.785  &0.608  &0.596\\
WEAC-CL  &0.581  &0.520  &0.618  &0.431  &0.463  &0.194  &0.643  &0.371  &0.456  &0.235\\
WEAC-SL  &0.602  &0.268  &0.605  &0.417  &0.039  &0.002  &0.348  &0.126  &0.228  &0.002\\
\hline
\hline
\multirow{2}{*}{Method} &\multicolumn{2}{c|}{\emph{PD}}   &\multicolumn{2}{c|}{\emph{USPS}} &\multicolumn{2}{c|}{\emph{FC}} &\multicolumn{2}{c|}{\emph{KDD99-10P}} &\multicolumn{2}{c|}{\emph{KDD99}}\\
\cline{2-11}
&Best-$k$&True-$k$    &Best-$k$&True-$k$    &Best-$k$&True-$k$  &Best-$k$&True-$k$    &Best-$k$&True-$k$\\
\hline
\hline
PTA-AL  &\textbf{0.757}  &\textbf{0.732}  &\textbf{0.583}  &\textbf{0.565}  &\textbf{0.243}  &\textbf{0.232}    &\textbf{0.644}    &\textbf{0.525}    &\textbf{0.654}    &\textbf{0.510}\\
PTA-CL  &0.749  &\textbf{0.733}  &0.552  &0.530  &0.230  &\textbf{0.214}    &\textbf{0.659}    &0.509    &\textbf{0.683}    &0.489\\
PTA-SL  &0.700  &0.445  &0.499  &0.051  &\textbf{0.247}  &0.011    &0.636    &\textbf{0.545}    &0.635    &\textbf{0.545}\\
\hline
PTGP  &\textbf{0.755}  &\textbf{0.738}  &\textbf{0.568}  &\textbf{0.551}  &\textbf{0.239}  &\textbf{0.220}    &\textbf{0.647}    &\textbf{0.527}    &\textbf{0.664}    &\textbf{0.535}\\
\hline
\hline
CSPA  &0.669  &0.661  &0.481  &0.469  &0.213  &0.199  &N/A  &N/A  &N/A  &N/A\\
HGPA  &0.584  &0.193  &0.407  &0.017  &0.167  &0.103    &0.311    &0.155  &N/A  &N/A\\
MCLA  &0.699  &0.676  &0.519  &0.488  &0.229  &0.204    &0.622    &0.305    &0.621    &0.044\\
\hline
SRW  &0.469  &0.109  &0.467  &0.112  &0.198  &0.047  &N/A  &N/A  &N/A  &N/A\\
\hline
GP-MGLA  &\textbf{0.754}  &0.731  &0.560  &0.547  &0.227  &0.189    &0.631    &0.503    &0.623    &0.462\\
\hline
\hline
EAC-AL  &0.740  &0.699  &0.550  &0.526  &0.221  &0.194    &0.629    &0.510  &N/A  &N/A\\
EAC-CL  &0.615  &0.382  &0.424  &0.169  &0.207  &0.073    &0.601    &0.504  &N/A  &N/A\\
EAC-SL  &0.367  &0.013  &0.011  &0.001  &0.032  &0.002    &0.525    &0.218  &N/A  &N/A\\
\hline
SRS-AL  &N/A  &N/A  &N/A  &N/A  &N/A  &N/A  &N/A  &N/A  &N/A  &N/A\\
SRS-CL  &N/A  &N/A  &N/A  &N/A  &N/A  &N/A  &N/A  &N/A  &N/A  &N/A\\
SRS-SL  &N/A  &N/A  &N/A  &N/A  &N/A  &N/A  &N/A  &N/A  &N/A  &N/A\\
\hline
WCT-AL  &0.752  &0.695  &\textbf{0.563}  &0.533  &0.229  &0.199  &N/A  &N/A  &N/A  &N/A\\
WCT-CL  &0.694  &0.621  &0.516  &0.476  &0.217  &0.189  &N/A  &N/A  &N/A  &N/A\\
WCT-SL  &0.575  &0.015  &0.011  &0.001  &0.033  &0.001  &N/A  &N/A  &N/A  &N/A\\
\hline
ECMC-AL  &N/A  &N/A  &N/A  &N/A  &N/A  &N/A  &N/A  &N/A  &N/A  &N/A\\
ECMC-CL  &N/A  &N/A  &N/A  &N/A  &N/A  &N/A  &N/A  &N/A  &N/A  &N/A\\
ECMC-SL  &N/A  &N/A  &N/A  &N/A  &N/A  &N/A  &N/A  &N/A  &N/A  &N/A\\
\hline
WEAC-AL  &0.751  &0.716  &0.561  &\textbf{0.548}  &0.222  &0.194    &0.629    &0.499  &N/A  &N/A\\
WEAC-CL  &0.606  &0.349  &0.429  &0.146  &0.208  &0.065    &0.610    &0.502  &N/A  &N/A\\
WEAC-SL  &0.609  &0.029  &0.011  &0.001  &0.067  &0.002    &0.562    &0.275  &N/A  &N/A\\
\hline
\end{tabular}
\end{table*}

\begin{figure}[!t]
\begin{center}
{
{\includegraphics[width=0.98\columnwidth]{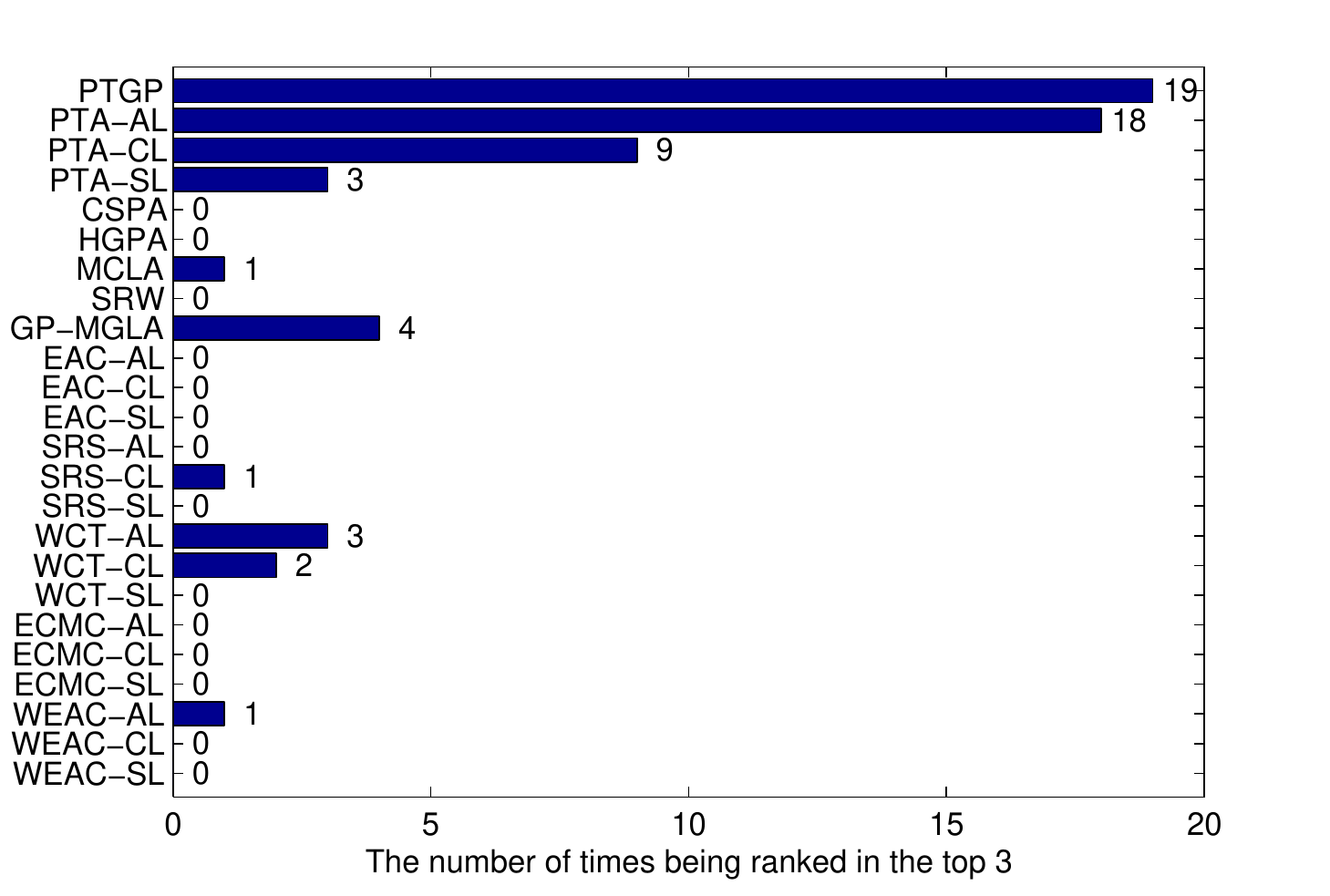}}}
\caption{The number of times of each method to be ranked in the top 3 (in terms of NMI) across the 20 columns in Table~\ref{table:compare_ce}.}
\label{fig:comp_top1top3}
\end{center}
\end{figure}

We run the proposed PTA and PTGP methods and the baseline methods 100 times on each dataset. If a method is computationally infeasible to be performed on a dataset, the corresponding NMI score will be labeled as "N/A". For each run, the ensemble of base clusterings is randomly drawn from the base clustering pool (see Section~\ref{sec:construct_base}). As the number of clusters of the consensus clustering needs to be pre-specified for the baseline methods and the proposed methods, to compare their clustering results in a fair way, we use two criteria to choose cluster numbers in the experiments, namely, best-$k$ and true-$k$. In the  best-$k$ criterion, the cluster number that leads to the best performance is specified for each method. In the true-$k$ criterion, the true number of classes of the dataset is specified for each method.

The average performances of the proposed PTA and PTGP methods and the baseline methods over 100 runs are reported in Table~\ref{table:compare_ce}. Each pair-wise similarity based method is associated with one of the three agglomerative clustering methods, namely, AL, CL, and SL. As shown in Table~\ref{table:compare_ce}, PTA-AL achieves the highest NMI scores for the \emph{MF}, \emph{MNIST}, and \emph{USPS} w.r.t. both best-$k$ and true-$k$ and almost the highest NMI scores for the \emph{ODR}, \emph{LS}, \emph{PD}, \emph{FC}, \emph{KDD99-10P}, and \emph{KDD99} datasets. The proposed PTGP method achieves the highest scores for the \emph{ODR} and \emph{LS} datasets w.r.t. both best-$k$ and true-$k$ and almost the highest scores for the \emph{MF}, \emph{IS}, \emph{PD}, \emph{USPS}, \emph{FC}, \emph{KDD99-10P}, and \emph{KDD99} datasets. To compare the performance of the test methods in a clearer way, Fig.~\ref{fig:comp_top1top3} shows the number of times of each method being ranked in the top 3 (in terms of NMI) in Table~\ref{table:compare_ce}. Out of the 20 columns in Table~\ref{table:compare_ce}, PTGP and PTA-AL are ranked in the top 3 (among the 24 test methods) 19 times and 18 times, respectively, while the best baseline method is ranked in the top 3 only 4 times (see Fig.~\ref{fig:comp_top1top3}). As can be seen in Table~\ref{table:compare_ce} and Fig.~\ref{fig:comp_top1top3}, the proposed PTA and PTGP methods achieve the overall best performance in clustering accuracy and robustness compared to the baseline methods across a variety of datasets.

It is worth mentioning that the PTGP method significantly outperforms the other five graph partitioning based methods, namely, CSPA, HGPA, MCLA, SRW, and GP-MGLA. Specifically, the PTGP method yields higher, or even significantly higher, NMI scores than the other graph partitioning based methods on all of the benchmark datasets (see Table~\ref{table:compare_ce}). Also, we compare the PTA method to the other five pair-wise similarity based methods, namely, EAC, SRS, WCT, ECMC, and WEAC, w.r.t. the same agglomerative clustering method. As shown in Table~\ref{table:compare_ce}, the PTA-AL method achieves the best performance among the pair-wise similarity based methods (all associated with AL) on all of the benchmark datasets. When considering CL or SL, the advantages of PTA become even greater. The PTA-CL method significantly outperforms the other pair-wise similarity based methods associated with CL on all benchmark datasets except \emph{IS}. The PTA-SL method achieves \emph{far} better consensus results than the other pair-wise similarity based methods associated with SL on all of the benchmark datasets. With the ability of handling uncertain links and incorporating global information to construct more accurate local links, the proposed PTA and PTGP methods perform significantly better than the baseline ensemble clustering methods on the benchmark datasets.

\begin{figure*}[!th]
\begin{center}
{\subfigure[\emph{MF}]
{\includegraphics[width=0.395\columnwidth]{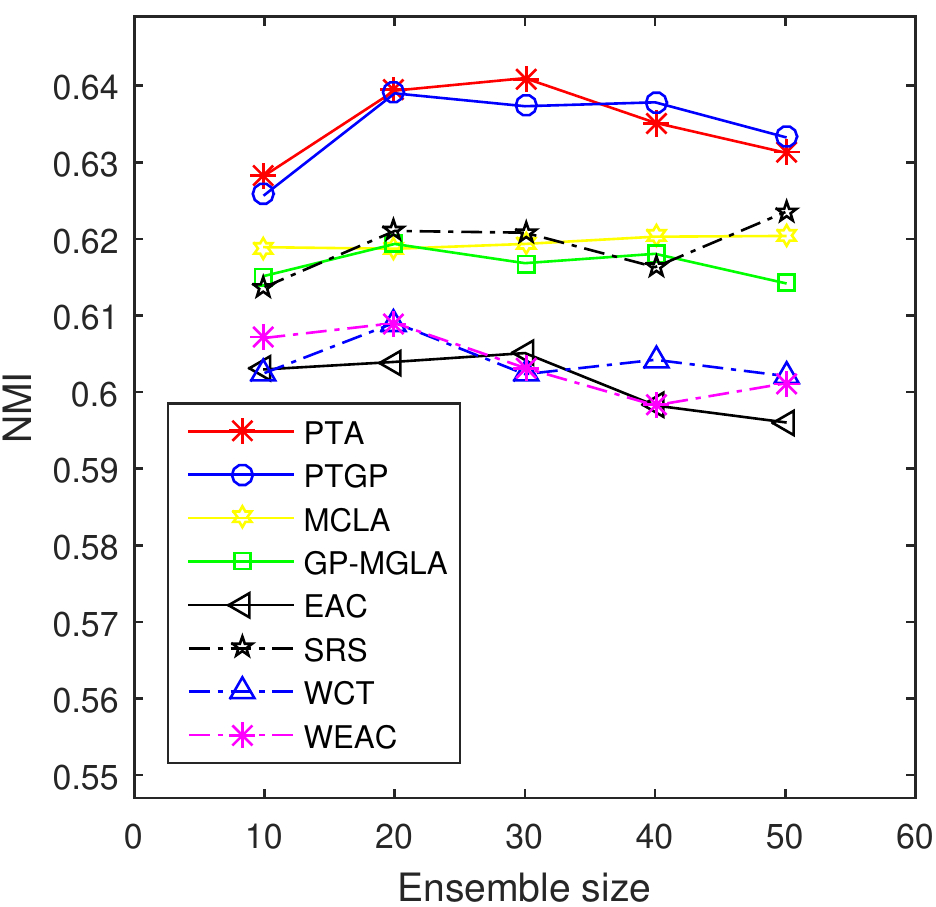}\label{fig:comp_Msize1}}}
{\subfigure[\emph{IS}]
{\includegraphics[width=0.395\columnwidth]{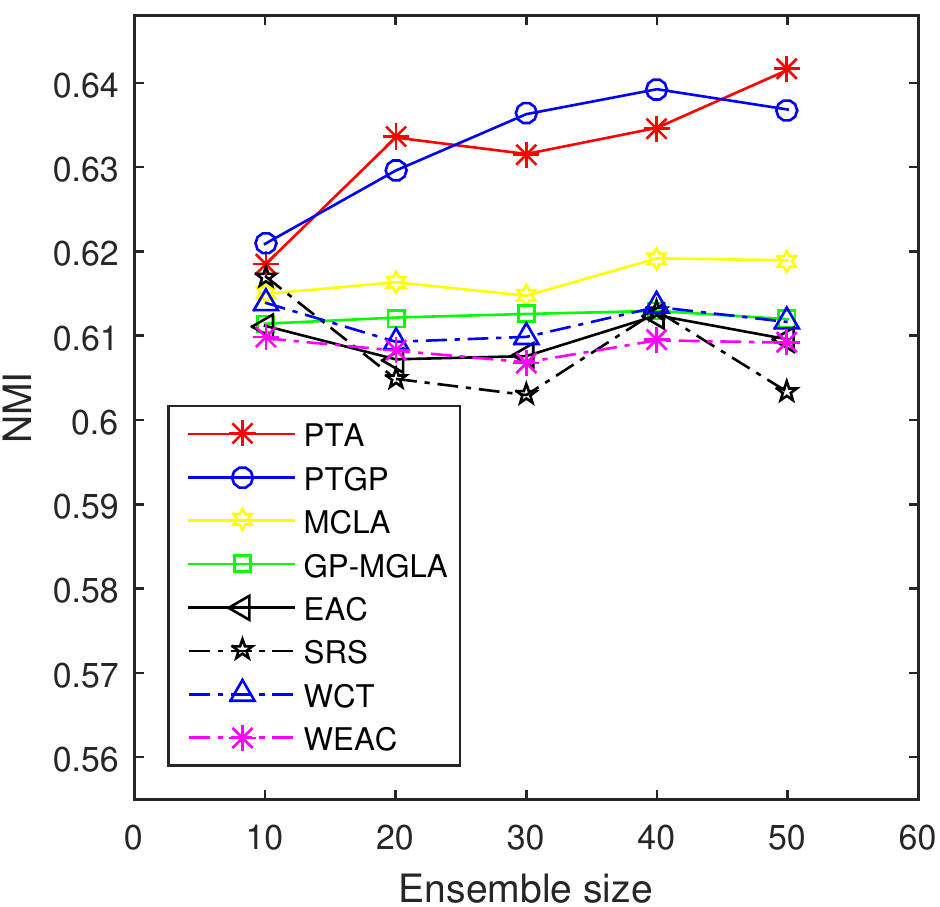}}}
{\subfigure[\emph{MNIST}]
{\includegraphics[width=0.395\columnwidth]{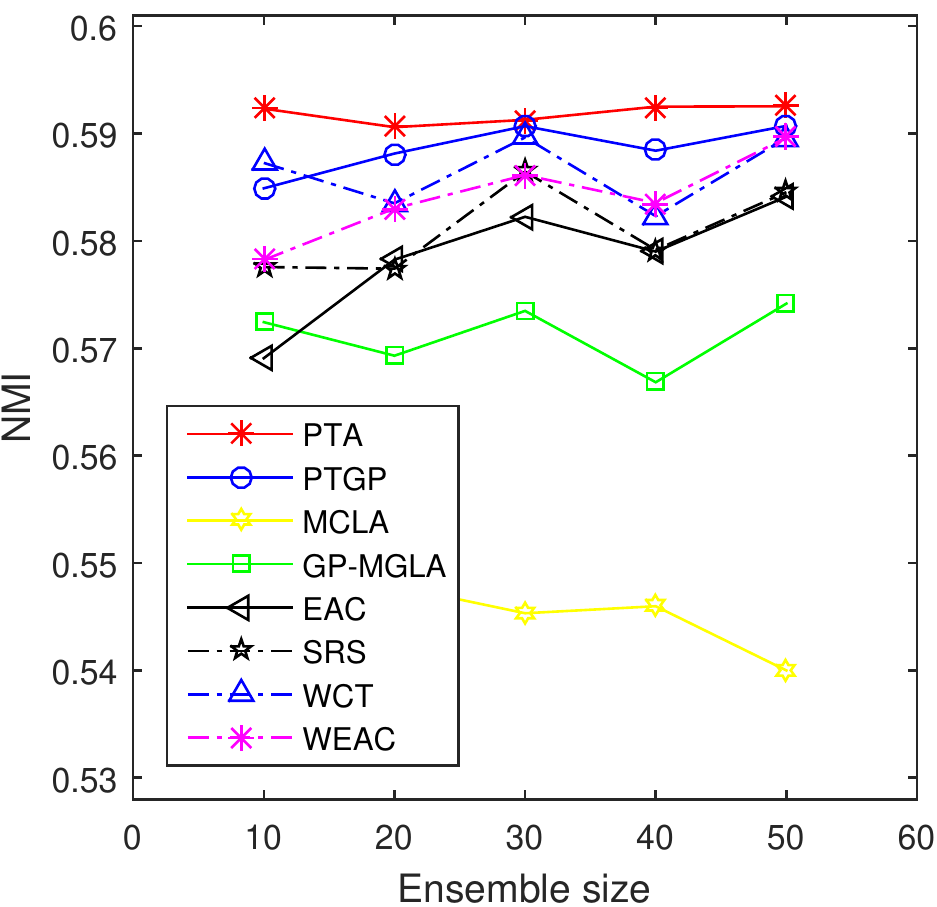}}}
{\subfigure[\emph{ODR}]
{\includegraphics[width=0.395\columnwidth]{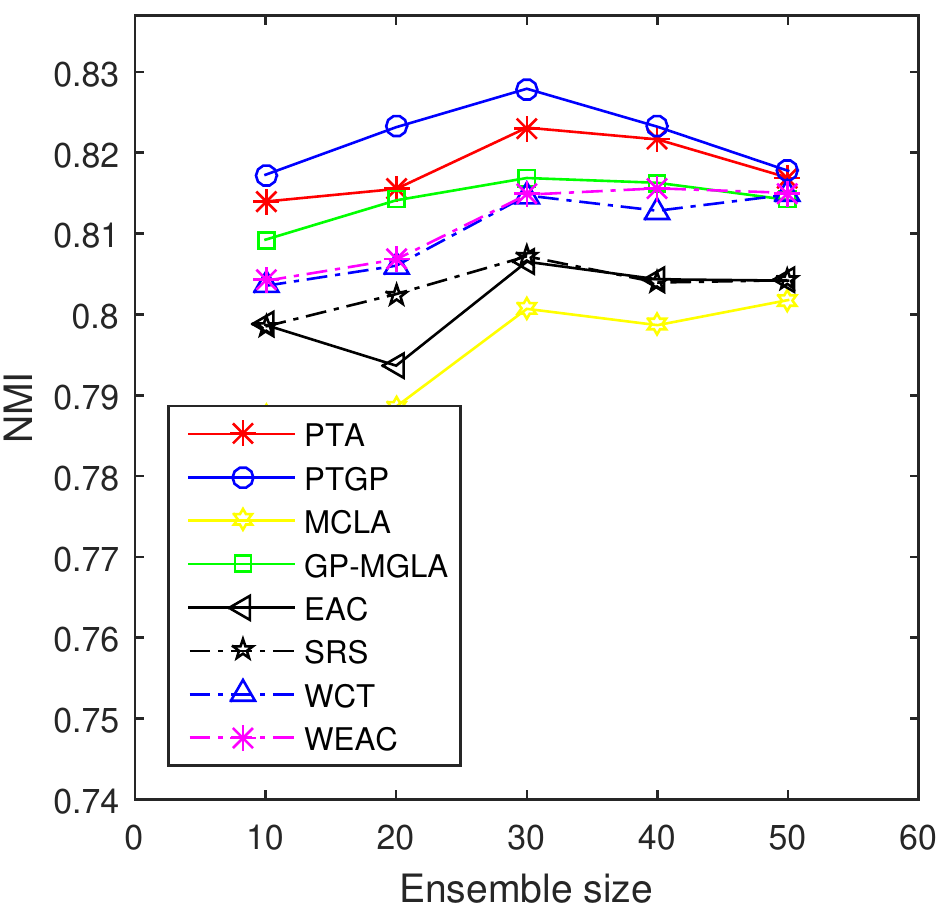}}}
{\subfigure[\emph{LS}]
{\includegraphics[width=0.395\columnwidth]{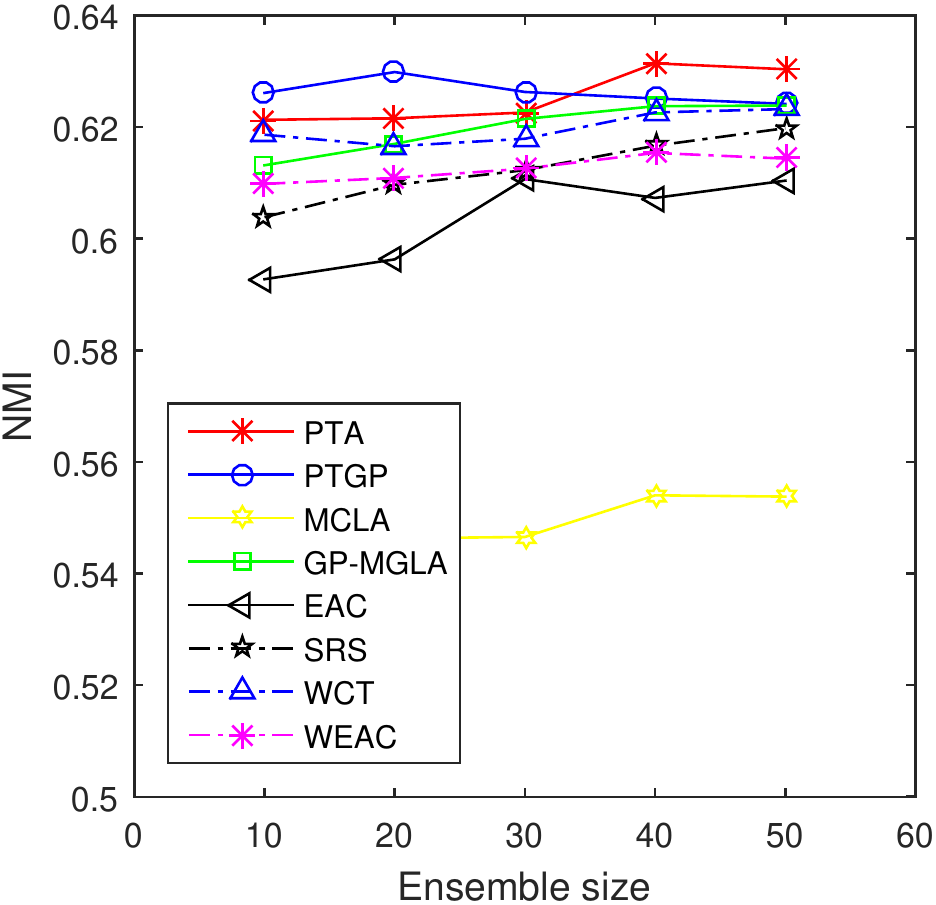}}}
{\subfigure[\emph{PD}]
{\includegraphics[width=0.395\columnwidth]{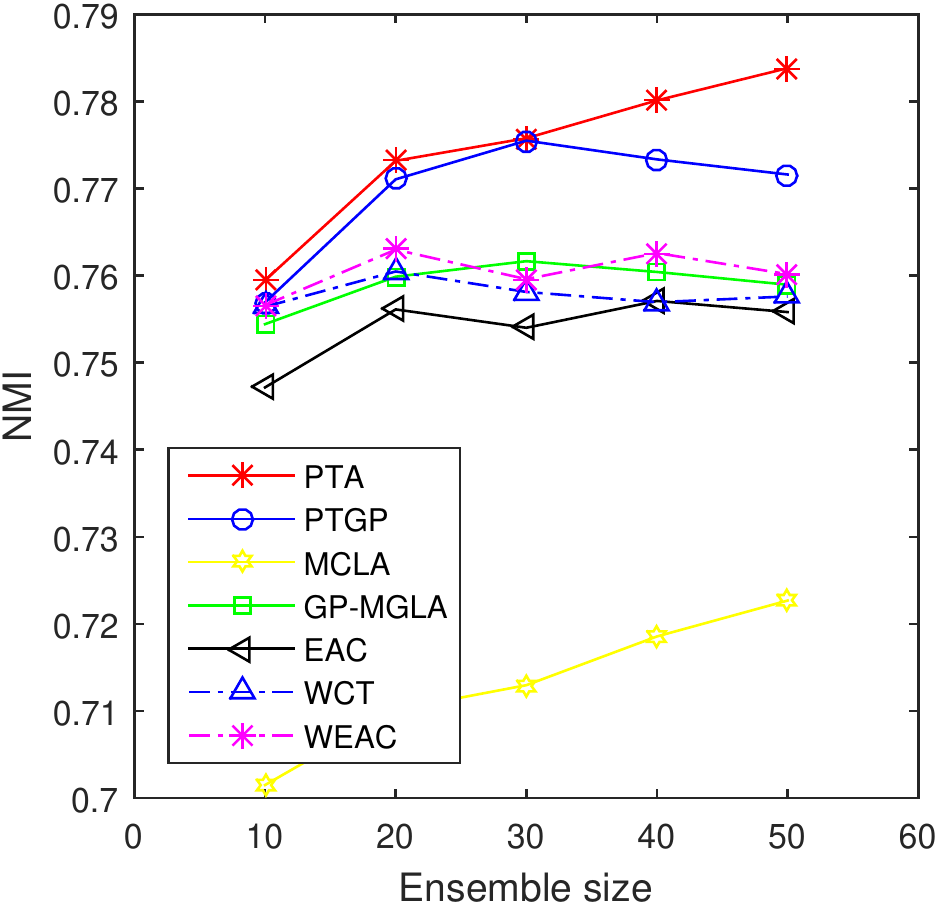}}}
{\subfigure[\emph{USPS}]
{\includegraphics[width=0.395\columnwidth]{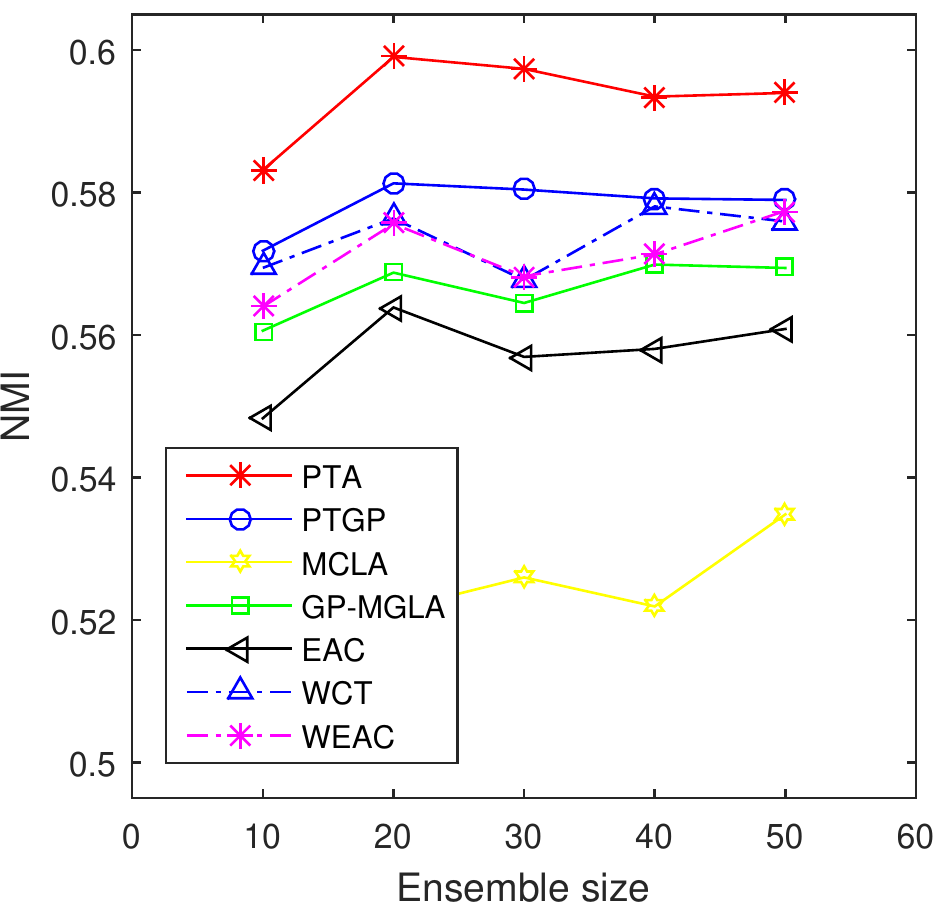}}}
{\subfigure[\emph{FC}]
{\includegraphics[width=0.395\columnwidth]{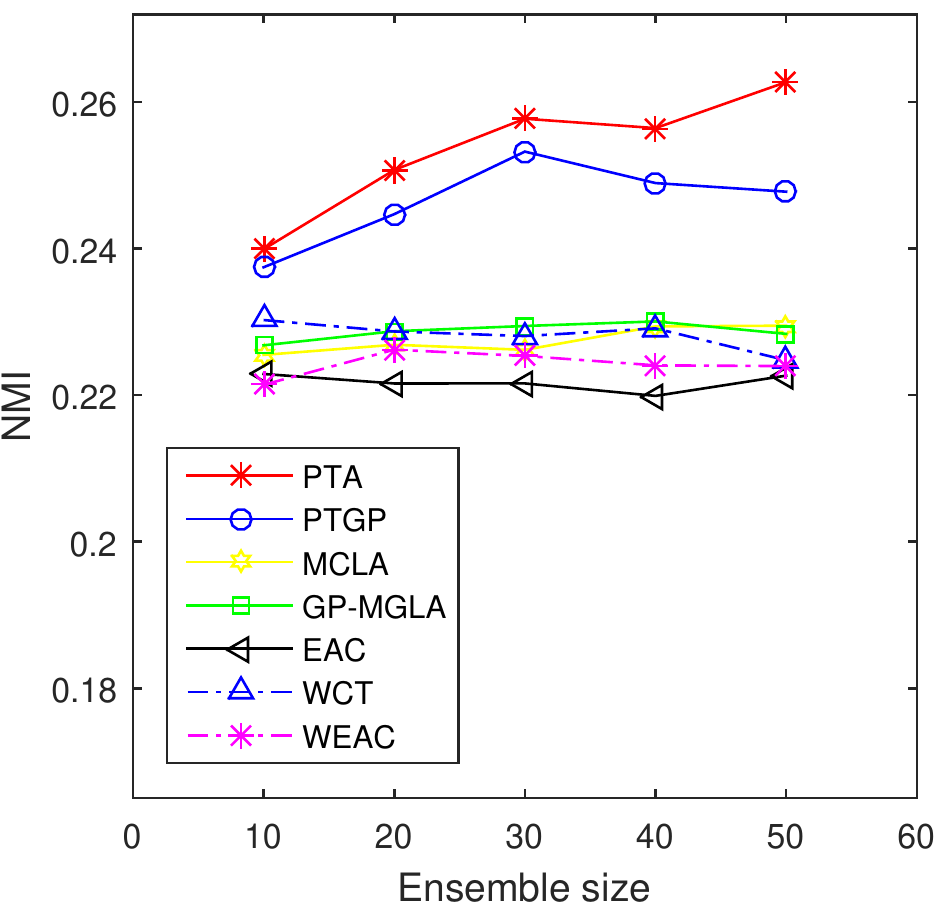}}}
{\subfigure[\emph{KDD99-10P}]
{\includegraphics[width=0.395\columnwidth]{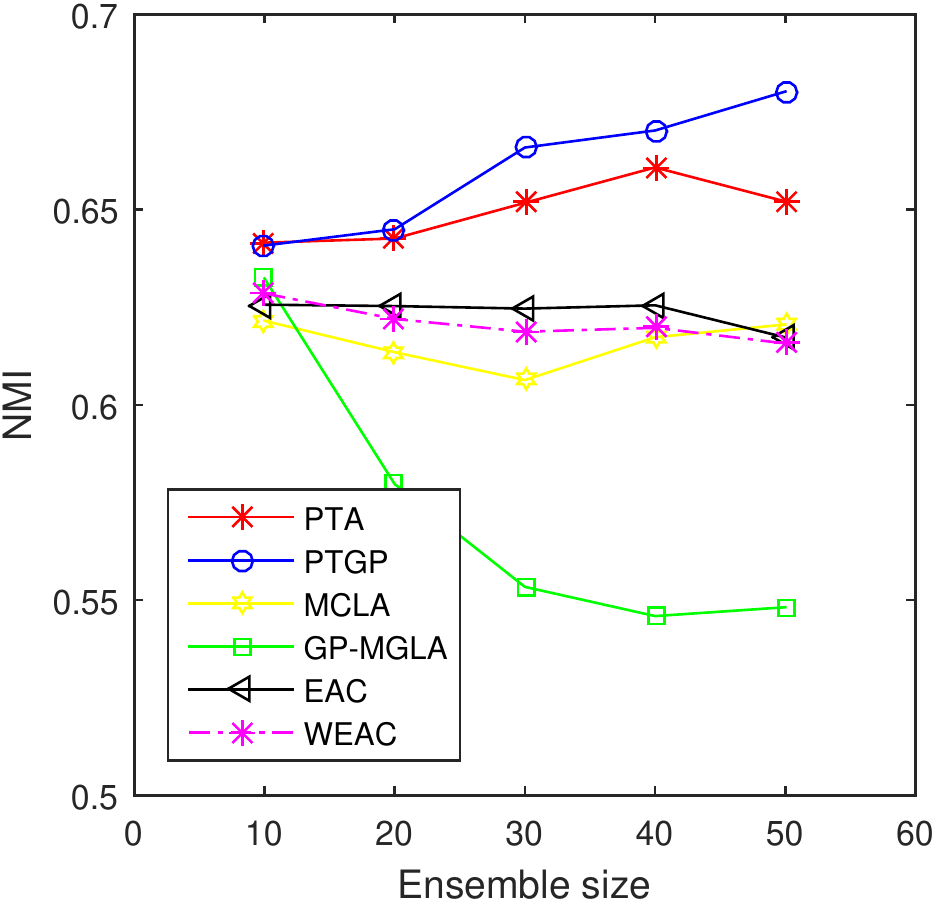}\label{fig:comp_Msize9}}}
{\subfigure[\emph{KDD99}]
{\includegraphics[width=0.395\columnwidth]{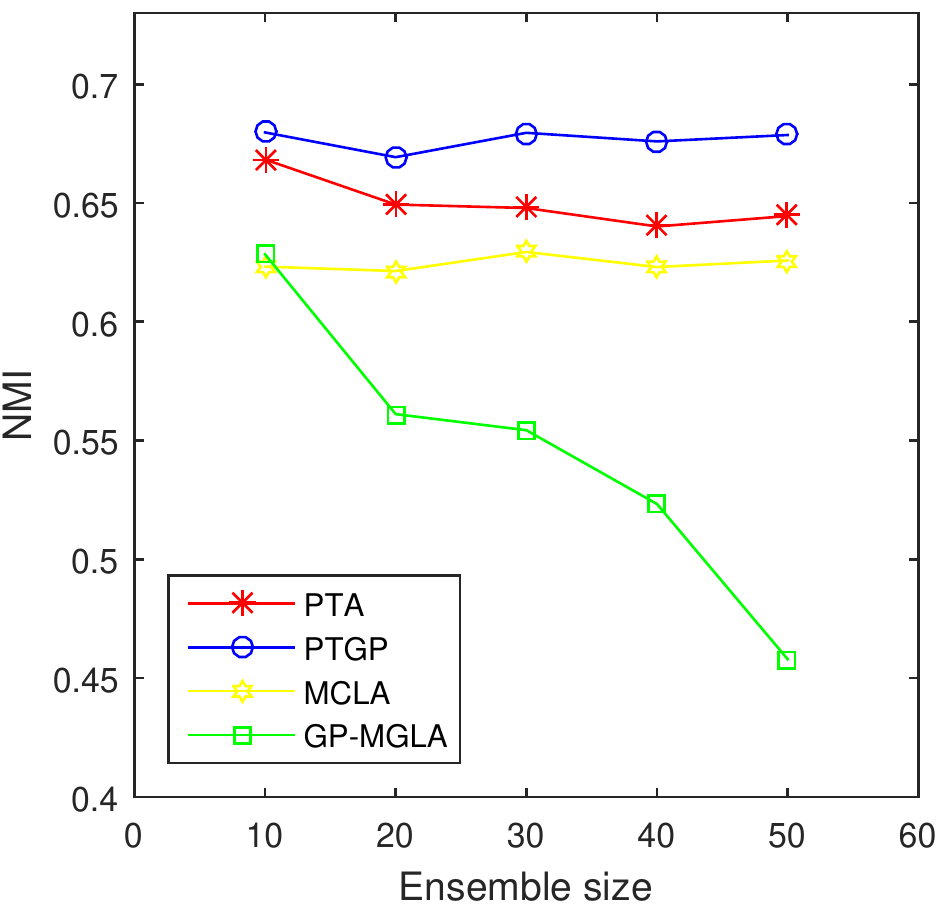}}}
\caption{The average performances over 20 runs by different approaches with varying ensemble sizes $M$.}
\label{fig:comp_Msize}
\end{center}
\end{figure*}

\subsection{Robustness to Ensemble Size $M$}
\label{sec:ensemble_size}

In this section, we further evaluate the robustness of the proposed methods with varying ensemble sizes $M$. For each ensemble size $M$, we run the PTA and PTGP methods and the baseline methods 20 times and report their average performances in Fig.~\ref{fig:comp_Msize}. Here, all pair-wise similarity based methods are associated with AL. The PTA and PTGP methods produce consistently good results with different ensemble sizes. As shown in Fig.~\ref{fig:comp_Msize}, the PTA and PTGP methods yield the best or nearly the best performance with varying $M$ for the benchmark datasets. Especially, for the \emph{MF}, \emph{IS}, \emph{PD}, \emph{FC}, \emph{KDD99-10P}, and \emph{KDD99} datasets, the PTA and PTGP methods exhibit significant advantages in the robustness to varying ensemble sizes over the baseline methods. Further, we illustrate the average performances of different approaches over nine datasets, KDD99 not included, in Fig.~\ref{fig:ensize9in1}, which is in fact the average of the first nine sub-figures in Fig.~\ref{fig:comp_Msize}, i.e., the sub-figures from Fig.~\ref{fig:comp_Msize1} to Fig.~\ref{fig:comp_Msize9}. Figure~\ref{fig:ensize9in1} provides an average view to compare our methods and the baseline methods across datasets, which demonstrates the advantage of our methods in the robustness to various datasets and ensemble sizes. In particular, the advantage of the proposed methods becomes even greater when the ensemble size gets larger, e.g., when the ensemble size goes beyond 20 (see Fig.~\ref{fig:ensize9in1}).

\subsection{Execution Time}
\label{sec:comp_time}

In this section, we evaluate the time performances of the proposed PTA and PTGP methods and the baseline methods with varying data sizes. The experiments are conducted on varying subsets of the \emph{KDD99} dataset. The sizes of the subsets range from $0$ to the full size $494,020$. The execution times of the proposed methods and the baseline methods w.r.t. varying data sizes are illustrated in Fig.~\ref{fig:time_complexity}. Note that the time lines of PTA and PTGP almost overlap with each other due to their very similar time performances. Besides that, the time lines of EAC and WEAC also nearly overlap with each other. As shown in Fig.~\ref{fig:time_complexity}, the proposed PTA and PTGP methods exhibit a significant advantage in efficiency over the baseline methods. Especially, the proposed PTGP and PTA methods consume 3.22 seconds and 3.70 seconds, respectively, to process the entire dataset of KDD99 which consists of nearly half a million objects, whereas eight out of the ten baseline methods are not even computationally feasible to process such large-scale datasets.

\section{Conclusion}
\label{sec:conclusion}

In this paper, we propose a novel ensemble clustering approach based on sparse graph representation and probability trajectory analysis. The microclusters are exploited as primitive objects to speedup the computation. We present the ENS strategy to identify uncertain links in a locally adaptive manner and construct a sparse graph with a small number of probably reliable links. It has been shown that the use of a small number of probably reliable links can lead to significantly better clusterings than using all graph links regardless of their reliability. To explore the global structure information in the ensemble, we utilize the random walks driven by a new transition probability matrix that considers the link weights and  the node sizes simultaneously. A novel and dense similarity measure termed PTS is derived from the sparse graph $K$-ENG by analyzing the probability trajectories of the random walkers. Based on PTS, we further propose two consensus functions, termed PTA and PTGP, respectively. Extensive experiments have been conducted on ten real-world datasets. The experimental results show that our approach significantly outperforms the state-of-the-art approaches in both clustering accuracy and efficiency.

\begin{figure}[!tb]
\label{fig:time_complexity}
\hskip 0.2in
\begin{center}
{\subfigure[]
{\includegraphics[width=0.5039\columnwidth]{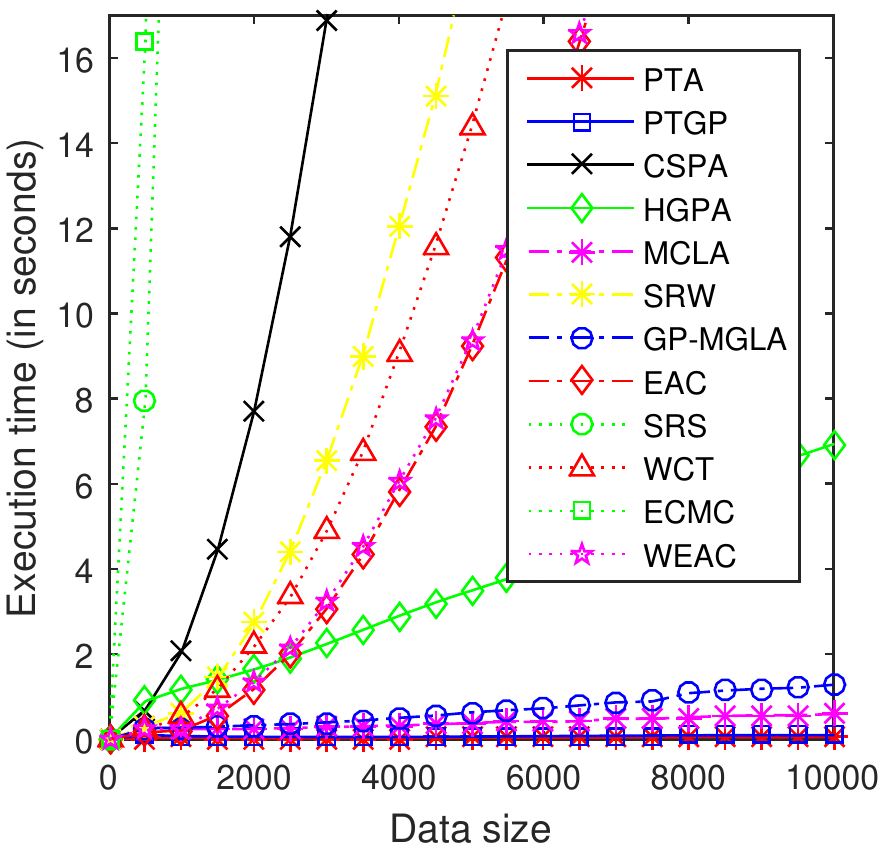}}}
{\subfigure[]
{\includegraphics[width=0.4864\columnwidth]{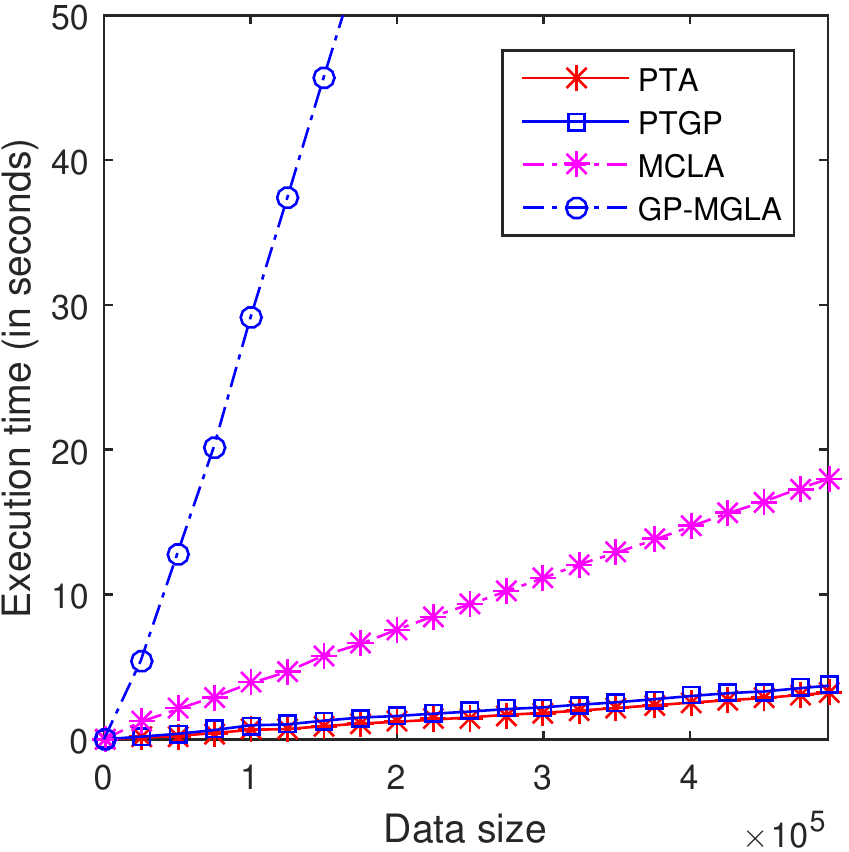}}}
\caption{Execution time of different ensemble clustering approaches as the data size varies (a) from 0 to 10,000 and (b) from 0 to 494,020. }
\label{fig:time_complexity}
\end{center}
\end{figure}

\ifCLASSOPTIONcompsoc
  \section*{Acknowledgments}
\else
  \section*{Acknowledgment}
\fi

The authors would like to thank the anonymous reviewers for their insightful comments and suggestions which helped enhance this paper significantly. This project was supported by NSFC (61173084 \& 61502543), National Science \& Technology Pillar Program (No. 2012BAK16B06), Guangdong Natural Science Funds for Distinguished Young Scholar, the GuangZhou Program (No. 201508010032), and the PhD Start-up Fund of Natural Science Foundation of Guangdong Province, China (No. 2014A030310180).

\ifCLASSOPTIONcaptionsoff
  \newpage
\fi

\bibliographystyle{IEEEtran}
\bibliography{tkde_2014_short}

\begin{thebibliography}{10}
\providecommand{\url}[1]{#1}
\csname url@samestyle\endcsname
\providecommand{\newblock}{\relax}
\providecommand{\bibinfo}[2]{#2}
\providecommand{\BIBentrySTDinterwordspacing}{\spaceskip=0pt\relax}
\providecommand{\BIBentryALTinterwordstretchfactor}{4}
\providecommand{\BIBentryALTinterwordspacing}{\spaceskip=\fontdimen2\font plus
\BIBentryALTinterwordstretchfactor\fontdimen3\font minus
  \fontdimen4\font\relax}
\providecommand{\BIBforeignlanguage}[2]{{%
\expandafter\ifx\csname l@#1\endcsname\relax
\typeout{** WARNING: IEEEtran.bst: No hyphenation pattern has been}%
\typeout{** loaded for the language `#1'. Using the pattern for}%
\typeout{** the default language instead.}%
\else
\language=\csname l@#1\endcsname
\fi
#2}}
\providecommand{\BIBdecl}{\relax}
\BIBdecl

\bibitem{Fred05_EAC}
A.~L.~N. Fred and A.~K. Jain, ``Combining multiple clusterings using evidence
  accumulation,'' \emph{IEEE Trans. PAMI}, vol.~27, no.~6, pp. 835--850, 2005.

\bibitem{LiT08}
T.~Li and C.~Ding, ``Weighted consensus clustering,'' in \emph{SDM}, 2008, pp.
  798--809.

\bibitem{yang11_tkde}
Y.~Yang and K.~Chen, ``Temporal data clustering via weighted clustering
  ensemble with different representations,'' \emph{IEEE Trans. KDE}, vol.~23,
  no.~2, pp. 307--320, 2011.

\bibitem{iam_on11_linkbased}
N.~Iam-On, T.~Boongoen, S.~Garrett, and C.~Price, ``A link-based approach to
  the cluster ensemble problem,'' \emph{IEEE Trans. PAMI}, vol.~33, no.~12, pp.
  2396--2409, 2011.

\bibitem{LiT11}
Y.~Zhang and T.~Li, ``Extending consensus clustering to explore multiple
  clustering views,'' in \emph{SDM}, 2011, pp. 920--931.

\bibitem{iamon12_tkde}
N.~Iam-On, T.~Boongeon, S.~Garrett, and C.~Price, ``A link-based cluster
  ensemble approach for categorical data clustering,'' \emph{IEEE Trans. KDE},
  vol.~24, no.~3, pp. 413--425, 2012.

\bibitem{yi_icdm12}
J.~Yi, T.~Yang, R.~Jin, and A.~K. Jain, ``Robust ensemble clustering by matrix
  completion,'' in \emph{ICDM}, 2012, pp. 1176--1181.

\bibitem{ren13_icdm}
Y.~Ren, C.~Domeniconi, G.~Zhang, and G.~Yu, ``Weighted-object ensemble
  clustering,'' in \emph{ICDM}, 2013, pp. 627--636.

\bibitem{strehl02}
A.~Strehl and J.~Ghosh, ``Cluster ensembles: A knowledge reuse framework for
  combining multiple partitions,'' \emph{JMLR}, vol.~3, pp. 583--617, 2003.

\bibitem{fern04_bipartite}
X.~Z. Fern and C.~E. Brodley, ``Solving cluster ensemble problems by bipartite
  graph partitioning,'' in \emph{ICML}, 2004, pp. 36--43.

\bibitem{Mimaroglu11_pr}
S.~Mimaroglu and E.~Erdil, ``Combining multiple clusterings using similarity
  graph,'' \emph{Pattern Recognition}, vol.~44, no.~3, pp. 694--703, 2011.

\bibitem{cristofor02}
D.~Cristofor and D.~Simovici, ``Finding median partitions using
  information-theoretical-based genetic algorithms,'' \emph{Journal of
  Universal Computer Science}, vol.~8, no.~2, pp. 153--172, 2002.

\bibitem{topchy05}
A.~Topchy, A.~K. Jain, and W.~Punch, ``Clustering ensembles: models of
  consensus and weak partitions,'' \emph{IEEE Trans. PAMI}, vol.~27, no.~12,
  pp. 1866--1881, 2005.

\bibitem{wang09_pr}
X.~Wang, C.~Yang, and J.~Zhou, ``Clustering aggregation by probability
  accumulation,'' \emph{Pattern Recognition}, vol.~42, no.~5, pp. 668--675,
  2009.

\bibitem{newman04}
M.~E.~J. Newman and M.~Girvan, ``Finding and evaluating community structure in
  networks,'' \emph{Physical Review E}, vol.~69, 2004.

\bibitem{pons_rw_05}
P.~Pons and M.~Latapy, ``Computing communities in large networks using random
  walks,'' in \emph{ISCIS}, 2005.

\bibitem{lai_PRE10}
D.~Lai, H.~Lu, and C.~Nardini, ``Enhanced modularity-based community detection
  by random walk network preprocessing,'' \emph{Physical Review E}, vol.~81,
  2010.

\bibitem{neiwalk14_tkde}
C.-D. Wang, J.-H. Lai, and P.~S. Yu, ``{NEIW}alk: Community discovery in
  dynamic content-based networks,'' \emph{IEEE Trans. KDE}, vol.~26, no.~7, pp.
  1734--1748, 2014.

\bibitem{huang14_weac}
D.~Huang, J.-H. Lai, and C.-D. Wang, ``Combining multiple clusterings via crowd
  agreement estimation and multi-granularity link analysis,''
  \emph{Neurocomputing}, vol. 170, pp. 240--250, 2015.

\bibitem{vega_pons11_survey}
S.~Vega-Pons and J.~Ruiz-Shulcloper, ``A survey of clustering ensemble
  algorithms,'' \emph{IJPRAI}, vol.~25, no.~3, pp. 337--372, 2011.

\bibitem{li07}
Y.~Li, J.~Yu, P.~Hao, and Z.~Li, ``Clustering ensembles based on normalized
  edges,'' in \emph{PAKDD}, 2007, pp. 664--671.

\bibitem{abdala10_icpr}
D.~D. Abdala, P.~Wattuya, and X.~Jiang, ``Ensemble clustering via random walker
  consensus strategy,'' in \emph{ICPR}, 2010, pp. 1433--1436.

\bibitem{iamon08_icds}
N.~Iam-On, T.~Boongoen, and S.~Garrett, ``Refining pairwise similarity matrix
  for cluster ensemble problem with cluster relations,'' in \emph{ICDS}, 2008,
  pp. 222--233.

\bibitem{vega_pons10}
S.~Vega-Pons, J.~Correa-Morris, and J.~Ruiz-Shulcloper, ``Weighted partition
  consensus via kernels,'' \emph{Pattern Recognition}, vol.~43, no.~8, pp.
  2712--2724, 2010.

\bibitem{vega_pons_PRL11}
S.~Vega-Pons, J.~Ruiz-Shulcloper, and A.~Guerra-Gand\'{o}n, ``Weighted
  association based methods for the combination of heterogeneous partitions,''
  \emph{Pattern Recognition Letters}, vol.~32, no.~16, pp. 2163--2170, 2011.

\bibitem{CVPR12_Li}
Z.~Li, X.-M. Wu, and S.-F. Chang, ``Segmentation using superpixels: A bipartite
  graph partitioning approach,'' in \emph{CVPR}, 2012, pp. 789--796.

\bibitem{lecun98}
Y.~LeCun, L.~Bottou, Y.~Bengio, and P.~Haffner, ``Gradient-based learning
  applied to document recognition,'' \emph{Proceedings of the IEEE}, vol.~86,
  no.~11, pp. 2278--2324, 1998.

\bibitem{Dueck_AP_PHDThesis:09}
D.~Dueck, ``Affinity propagation: Clustering data by passing messages,'' Ph.D.
  dissertation, University of Toronto, 2009.

\bibitem{uci_kdd_archive99}
\BIBentryALTinterwordspacing
S.~Hettich and S.~D. Bay, ``The {UCI} {KDD} {A}rchive,'' 1999. [Online].
  Available: \url{http://kdd.ics.uci.edu}
\BIBentrySTDinterwordspacing

\bibitem{Bache+Lichman:2013}
\BIBentryALTinterwordspacing
K.~Bache and M.~Lichman, ``{UCI} machine learning repository,'' 2013. [Online].
  Available: \url{http://archive.ics.uci.edu/ml}
\BIBentrySTDinterwordspacing

\bibitem{xu93_rpcl}
L.~Xu, A.~Krzyzak, and E.~Oja, ``Rival penalized competitive learning for
  clustering analysis, {RBF} net, and curve detection,'' \emph{IEEE Trans. NN},
  vol.~4, no.~4, pp. 636--649, 1993.

\end{thebibliography}

\begin{IEEEbiography}[{\includegraphics[width=1in,height=1.25in,clip,keepaspectratio]{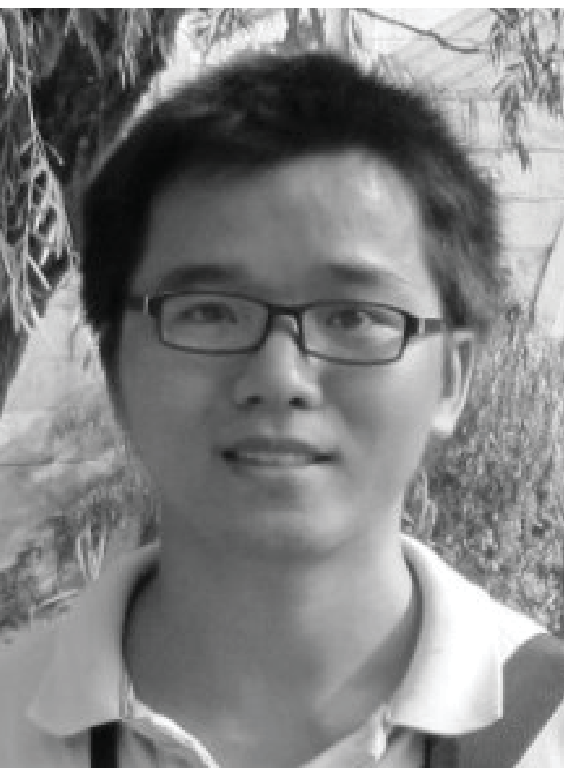}}]{Dong Huang}
received his B.S. degree in computer science in 2009 from South China University of Technology, China. He received his M.Sc. degree in computer science in 2011 and his Ph.D. degree in computer science in 2015, both from Sun Yat-sen University, China. He joined South China Agricultural University in 2015 as an Assistant Professor with College of Mathematics and Informatics. His research interests include data mining and pattern recognition. He is a member of the IEEE.
\end{IEEEbiography}

\begin{IEEEbiography}[{\includegraphics[width=1in,height=1.25in,clip,keepaspectratio]{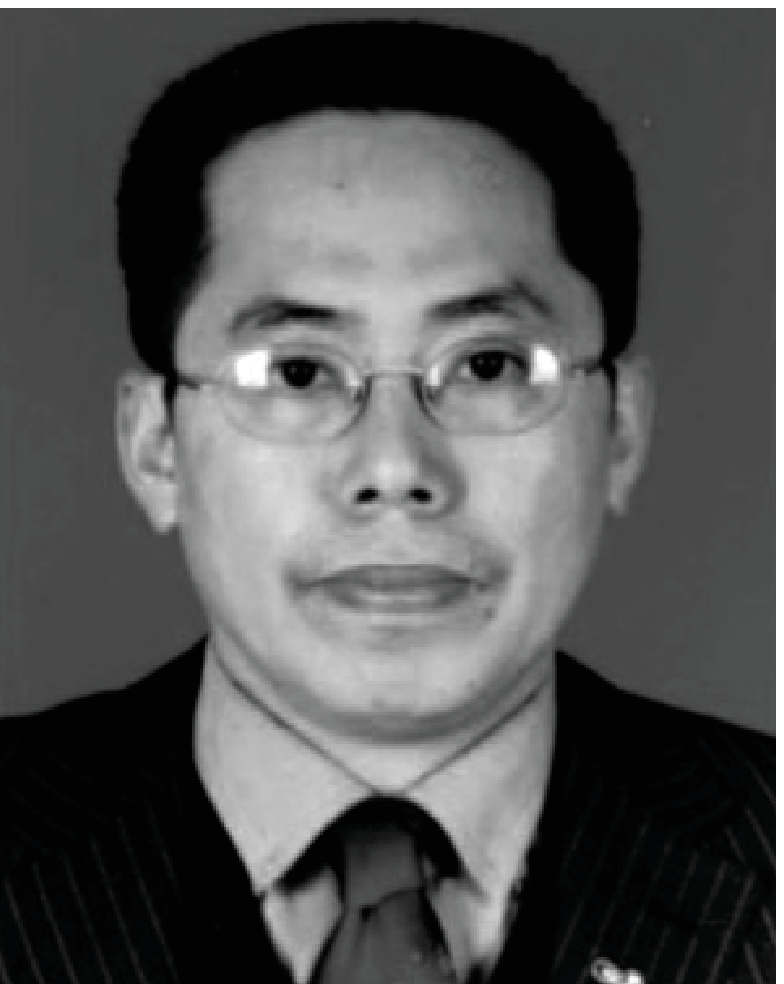}}]{Jian-Huang Lai}
received the M.Sc. degree in applied mathematics in 1989 and the Ph.D. degree in mathematics in 1999 from Sun Yat-sen University, China. He joined Sun Yat-sen University in 1989 as an Assistant Professor, where he is currently a Professor with the Department of Automation of School of Information Science and Technology, and Dean of School of Information Science and Technology. His current research interests include the areas of digital image processing, pattern recognition, multimedia communication, wavelet and its applications. He has published more than 100 scientific papers in the international journals and conferences on image processing and pattern recognition, such as IEEE TPAMI, IEEE TKDE, IEEE TNN, IEEE TIP, IEEE TSMC-B, Pattern Recognition, ICCV, CVPR, IJCAI, ICDM and SDM. Prof. Lai serves as a Standing Member of the Image and Graphics Association of China, and also serves as a Standing Director of the Image and Graphics Association of Guangdong. He is a senior member of the IEEE.
\end{IEEEbiography}

\begin{IEEEbiography}[{\includegraphics[width=1in,height=1.25in,clip,keepaspectratio]{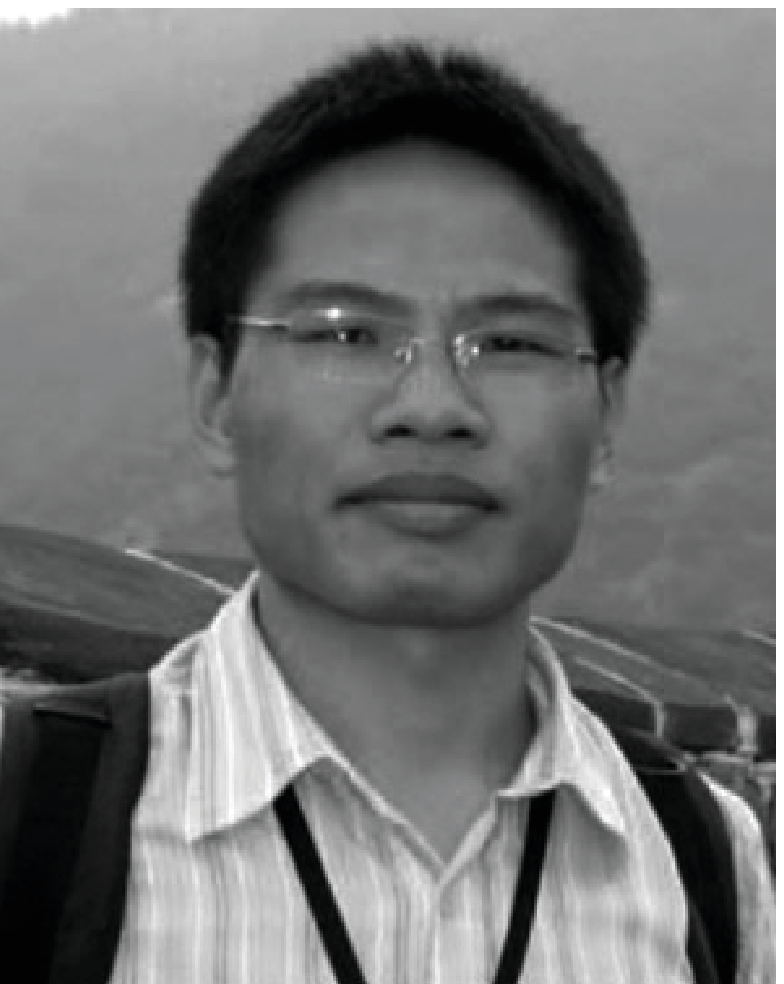}}]{Chang-Dong Wang}
received his Ph.D. degree in computer science in 2013 from Sun Yat-sen University, China. He is currently an assistant professor at School of Mobile Information Engineering, Sun Yat-sen University. His current research interests include machine learning and pattern recognition, especially focusing on data clustering and its applications. He has published over 30 scientific papers in international journals and conferences such as IEEE TPAMI, IEEE TKDE, IEEE TSMC-C, Pattern Recognition, Knowledge and Information System, Neurocomputing, ICDM and SDM. His ICDM 2010 paper won the Honorable Mention for Best Research Paper Awards. He won 2012 Microsoft Research Fellowship Nomination Award. He was awarded 2015 Chinese Association for Artificial Intelligence (CAAI) Outstanding Dissertation. He is a member of the IEEE.
\end{IEEEbiography}

\end{document}